\documentclass{article}

\PassOptionsToPackage{numbers, compress}{natbib}
\usepackage[preprint]{neurips_2026}

\usepackage[utf8]{inputenc} % allow utf-8 input
\usepackage[T1]{fontenc}    % use 8-bit T1 fonts
\usepackage{hyperref}       % hyperlinks
\usepackage{url}            % simple URL typesetting
\usepackage{booktabs}       % professional-quality tables
\usepackage{amsfonts}       % blackboard math symbols
\usepackage{nicefrac}       % compact symbols for 1/2, etc.
\usepackage{microtype}      % microtypography
\usepackage{xcolor}         % colors

\usepackage{amsmath}
\usepackage{amssymb}
\usepackage{amsthm}
\usepackage{mathtools}
\usepackage{microtype}
\usepackage{graphicx}
\usepackage{booktabs}
\usepackage{enumitem}
\usepackage{wrapfig}
\usepackage{afterpage}
\usepackage[table]{xcolor}
\usepackage{tikz-cd}

\theoremstyle{plain}  \newtheorem{proposition}{Proposition}
\theoremstyle{remark} \newtheorem{remark}{Remark}
\theoremstyle{corollary} \newtheorem{corollary}{Corollary}
\theoremstyle{theorem} \newtheorem{theorem}{Theorem}
\newtheorem{assumption}{Assumption}
\newtheorem{lemma}{Lemma}
\newcommand{\R}{\mathbb{R}}
\title{A meshfree exterior calculus for generalizable and data-efficient learning of physics from point clouds}

\author{%
  Benjamin D.~Shaffer\thanks{Mechanical Engineering and Applied Mechanics}%\thanks{Use footnote for providing further information
    \\
  University of Pennsylvania\\
  Philadelphia, PA 19104 \\
  \texttt{ben31@seas.upenn.edu} \\
  \And
  Brooks Kinch$^*$\\
  University of Pennsylvania\\
  Philadelphia, PA 19104 \\
  \And
  M. Ani Hsieh$^*$\\
  University of Pennsylvania\\
  Philadelphia, PA 19104 \\
  \And
  Nathaniel Trask$^*$\\
  University of Pennsylvania\\
  Philadelphia, PA 19104 \\
  \texttt{ntrask@seas.upenn.edu}\\
}

\begin{document}

\maketitle

\begin{abstract}
We introduce a meshfree exterior calculus (MEEC) for learning structure-preserving descriptions of physics on point clouds, and use it to build MEEC-Net, a data-efficient surrogate that transfers across resolutions, geometries, and physical parameters. MEEC equips an $\epsilon$-ball graph with virtual node and edge measures via a single sparse Schur complement solve; the resulting complex satisfies discrete conservation exactly, is end-to-end differentiable in the point positions, and exposes a direct geometry-to-physics link without the mesh-generation step required by conventional structure-preserving discretizations. MEEC-Net learns unknown physics as a shared edge-wise flux law in an SO($d$)-invariant local frame, so the same kernel produces compatible fluxes on any point cloud whose features lie in the training range. We prove a solution-error bound that splits into discretization and kernel-approximation terms which is independent of problem geometry, explaining the observed transfer from very few examples. We show that single-solution training transfers to unseen geometries, boundary conditions, and physical parameters. On five canonical PDE benchmarks MEEC-Net achieves 1–2 orders of magnitude lower out-of-distribution error than baseline neural-operator approaches. On the SimJEB structural-bracket benchmark it achieves competitive error while using substantially fewer training geometries.
% Code is available at \TODO{insert public repo URL}.
% and without synth augmentation-based reported results.
\end{abstract}

% ============================================================
\section{Introduction}
\label{sec:intro}
\begin{wrapfigure}{r}{0.4\textwidth}
    \vspace{-\intextsep}
    \centering
    \includegraphics[width=\linewidth]{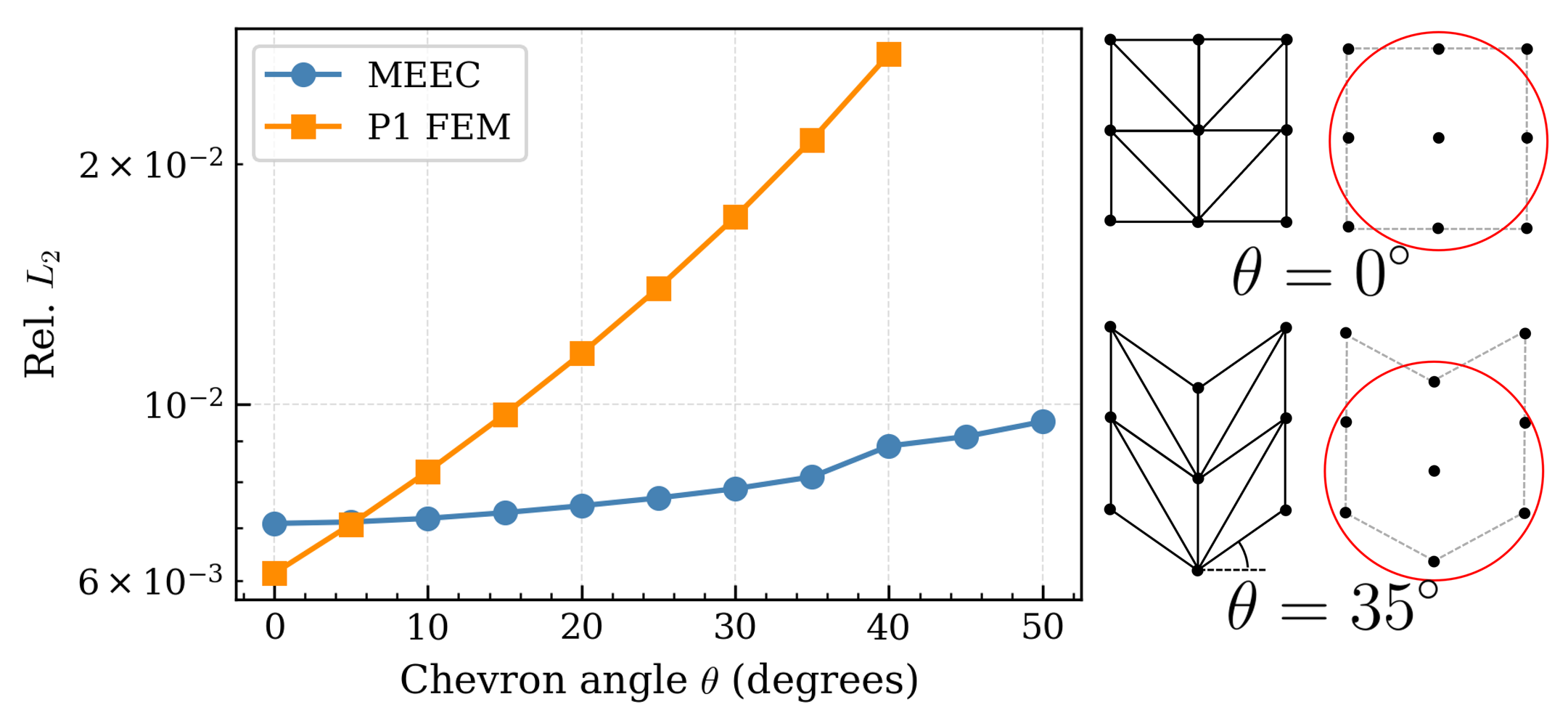}
    \caption{Mesh-based simulation is sensitive to mesh quality: on a chevron mesh with skew $\theta$, FEM fails to converge at $\theta>40^\circ$ while MEEC remains accurate. Meshfree stencils (red) trade compact stencils for robustness.}\label{fig:chevron_mesh}
\end{wrapfigure}
Direct neural operators learn global solution maps from examples; when the shape, boundary conditions, resolution, or physical parameters shift, they must extrapolate how the solution field changes from examples in the training distribution \citep{lu2021learning, li2020neural}.
Physical systems, however, are governed by a local constitutive law whose form is shared across problem specifications.
This suggests a different learning approach where the learned object is instead a \textit{local} flux law, and a point-cloud-native discretization assembles and solves the \textit{global} PDE on each new geometry.

Point clouds are a natural representation of geometry across sensing, simulation, and learning pipelines \citep{qi2017pointnet, kashefi2021point, kerbl20233d}. Meshfree PDE discretizations on point clouds are a niche but mature field \citep{belytschko1996meshless, liu2009meshfree} and avoid the brittle mesh-generation step of conventional methods (Figure~\ref{fig:chevron_mesh}) \citep{boggs2005dart}. However, they retain only point-evaluation degrees of freedom (DOFs), lacking a natural mechanism (e.g. volume/area DOFs) to preserve topological structure at the discrete level. Discrete exterior calculus (DEC) \citep{hirani2003discrete} and its finite-element counterpart (FEEC) \citep{arnold2018finite} provide powerful algebraic formalisms to describe topological structure, but both require an underlying mesh, as do recent data-driven extensions \citep{trask2022enforcing, actor2024data, kinch2025structure, shaffer2026structure}. We address this gap by introducing a \textbf{ME}shfree \textbf{E}xterior \textbf{C}alculus (MEEC).

\begin{figure}[!t]
    \centering
    \includegraphics[width=0.83\linewidth]{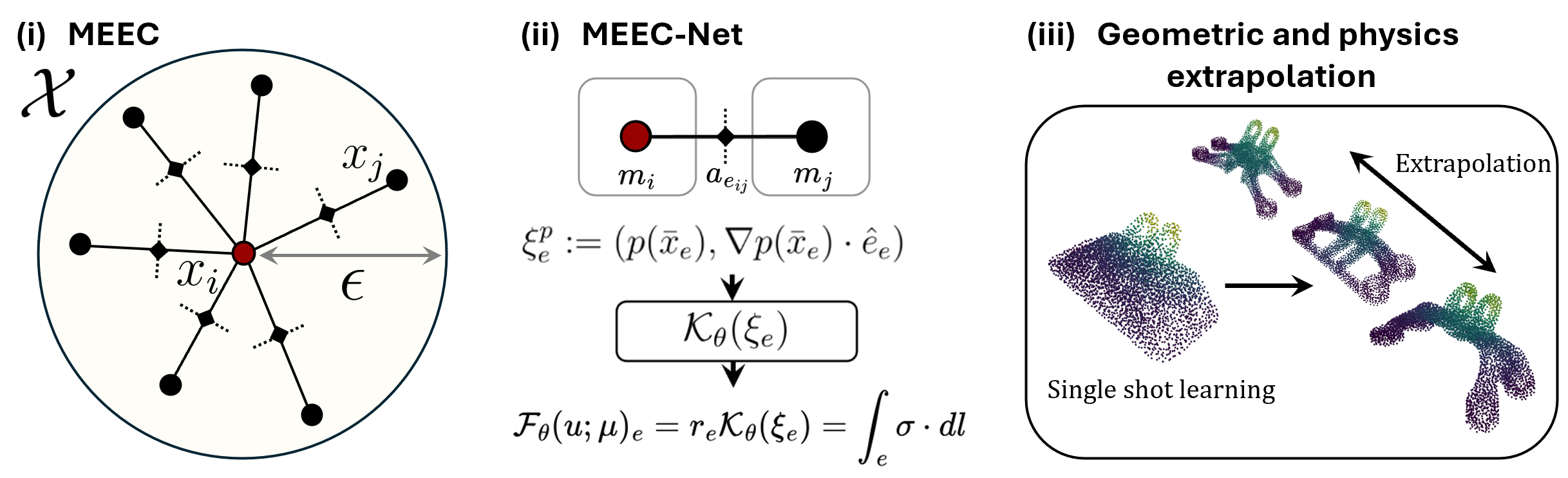}
    \caption{\textbf{Paper overview.} \textit{Left:} MEEC attaches virtual volumes and areas to an $\epsilon$-ball graph via a quadratic program, obtaining an end-to-end differentiable, convergent discretization of conservation laws natively on a point cloud. \textit{Center:} MEEC-Net learns an $SO(d)$-invariant description of physics mapping state $(u_i,u_j)$ to flux $\int_{e_{ij}}\sigma \cdot dl$, able to generalize across edge lengths and orientations unobserved in training. \textit{Right:} From a single observation, we are able to extract physical models that extrapolate across geometries and physics while preserving physical structure, outperforming SotA neural operators trained with dramatically more data.}
    \label{fig:placeholder_overview_fig}
\end{figure}
MEEC is built on the $\epsilon$-ball graph of point-cloud nodes in $\mathbb{R}^d$: a coboundary $d_0$ exact in the cochain sense (by the fundamental theorem of calculus, $(d_0 u)_e = \int_e \nabla u\cdot d\ell = u_+ - u_-$) and an adjoint divergence operator constructed to preserve polynomial consistency, obtained via a single sparse linear solve. The resulting discrete de~Rham complex is end-to-end differentiable in the point positions. On this complex, MEEC-Net then learns a state-to-flux constitutive law from frame-invariant edge features, trained from solution data by implicit differentiation through the converged PDE solve. MEEC handles geometry, resolution, boundary conditions, and global coupling while MEEC-Net learns only a reusable local flux law; this separation enables the data-efficient generalization demonstrated in Section~\ref{sec:results}.

\textit{Contributions:}

\textbf{1. MEEC} (\S\ref{sec:mfdec}). We present MEEC for PDEs on point clouds~\eqref{eq:complex}. We construct a point-cloud discretization comprising nodal cochains, oriented edge cochains, an incidence coboundary, and a moment-matched Hodge star obtained from a single sparse Schur solve; the operator is conservative by algebraic identity and $O(h)$-consistent, without any mesh or dual-cell construction. 

\textbf{2. MEEC-Net: local constitutive learning on edge cochains.} (\S\ref{sec:learned}).
On the MEEC complex, we parameterize unknown physics as a shared edge-local flux law on frame-invariant edge features of the state and physical parameters. The learned flux is assembled as a consistent 1-cochain and solved inside the MEEC conservation law, so the network learns a local constitutive relation rather than a global solution map. Frame-invariance and a prescription of fluxes as integrands of edge integrals enable extrapolation.

\textbf{3. Error decomposition for learned PDEs.} (\S\ref{sec:theory}). We prove a consistency-plus-stability estimate showing that solution error separates into MEEC discretization error, stability constants, and local flux-kernel approximation error. Geometry and boundary conditions enter only through stability and discretization constants that are uniform over a fixed point-cloud regularity class and edge-feature range, and the learned kernel error depends only on approximation over that feature range.

\textbf{4. Low-data generalization across point clouds and geometries.} (\S\ref{sec:results}). On canonical PDE benchmarks and an engineering-relevant jet engine bracket design problem, MEEC-Net achieves state-of-the-art transfer across resolutions, geometries, boundary conditions, and physical parameters, particularly in low-data regimes where direct neural surrogates must infer a global solution operator from few examples.

\section{Related Work}

\paragraph{Machine learning for PDEs.}
Machine learning is used in PDE problems for surrogate modeling, inverse problems, closure modeling, and equation discovery~\citep{karniadakis2021physics, sanderse2024scientific}. Physics-informed neural networks represent the solution with a neural network from a known residual~\citep{raissi2017physics}. Neural operators learn data-driven maps between function spaces, typically from coefficients, forcing, boundary data, or geometry to the solution field~\citep{lu2021learning,kovachki2023neural}. Recent operator architectures use spectral layers~\citep{kovachki2023neural}, graph message passing~\citep{pfaff2020learning,brandstetter2022message, zeng2024phympgn}, attention mechanisms~\citep{calvello2025continuum,wu2024transolver,alkin2024universal,hao2023gnot, liu2026geometry, wang2024latent}, and geometry-aware representations for irregular domains~\citep{li2023geometry,wen2025geometry,franco2023mesh,yamazaki2025finite,bleeker2025neuralcfd,serrano2024aroma,alkin2025ab}. These methods produce effective surrogates, but their global solution maps limit extrapolation beyond training data.
A alternative line of work defines predictions implicitly, as the solution of an optimization problem or fixed-point equation rather than as a direct network output. Differentiable optimization layers, deep equilibrium models, and general implicit-differentiation frameworks provide the tools to train such models through converged solves without unrolling the solver \citep{amos2017optnet,bai2019deep,blondel2022efficient}.
In PDE settings, \citet{kochkov2021machine} learn local corrections inside a CFD solver, \citet{you2022learning} learn constitutive operators inside a PDE solve, and \citet{jiao2025one} learn a spatially local operator from a single PDE solution. MEEC-Net uses this local-through-global formulation, with the learned flux assembled as a conservative edge cochain on the MEEC complex.

\noindent\textbf{Meshfree methods in numerical computing.} Many techniques achieve consistency by enforcing a local polynomial reproduction (LPR) property: RKPM \citep{chen2017meshfree}, MLS/GMLS \citep{lancaster1981surfaces, trask2017staggered}, and PSE \citep{schrader2010}\@. \citet{wendland2004scattered} and \citet{mirzaei2012generalized} establish conditions on point clouds under which LPR is well-posed. It is well known \citep{bonet1999variational} that a local construction can achieve either discrete conservation structure or LPR, but not both. Two prior works achieve both with global constructions: \citet{chiu2012conservative} formulates a global convex optimization problem to produce consistent meshfree finite volume stencils, while \citet{trask2020conservative} poses a sequence of weighted graph-Laplacian problems without DEC structure. The present work requires only a single sparse Schur solve with the connectivity of the $\epsilon$-ball graph, and admits a DEC interpretation to access a standard stability analysis for second-order elliptic problems. In scientific machine learning, mesh-based architectures such as MeshGraphNet \citep{pfaff2020learning} learn discretizations end-to-end without exploiting underlying numerical structure. Complementary works build explicitly upon conventional meshfree machinery: GMLS-Nets \citep{trask2019gmlsnets} parameterizes GMLS estimators as learnable layers, while SPNets \citep{schenck2018spnets} and Neural SPH \citep{toshev2024neuralsph} embed SPH kernel summations as differentiable primitives.

\noindent\textbf{Structure-preserving machine learning.} Embedding physical symmetries and conservation laws into learning architectures has consistently improved generalization and data efficiency \citep{bronstein2017geometric, bochev2006principles}. Hamiltonian and Lagrangian networks \citep{greydanus2019hamiltonian} provided early demonstrations that conservation structure can be hard-coded into learned dynamics. Equivariant graph networks \citep{satorras2021n, batzner20223, du2022se} extend this to rotation and translation invariance with strong results in molecular property prediction. Efforts to enforce conservation in neural operators more broadly have often resorted to post-hoc projection or penalty terms \citep{liu2025conservation}, whereas cochain-based methods embed it exactly through topology. In the PDE setting, mimetic and variational discretizations \citep{arnold2018finite} offer the appropriate algebraic framework. \citet{trask2022enforcing} introduced data-driven DEC for conservative PDE learning on meshes; subsequent work developed mortar coupling \citep{jiang2024structure}, conditional neural Whitney forms for digital twins \citep{kinch2025structure}, and structure-preserving geometry-aware surrogates \citep{actor2024data, shaffer2026structure}. These methods demonstrate that conservative cochain structure improves generalization and physical fidelity, but all rely on an underlying mesh. MEEC-Net extends this machinery to point clouds, preserving the structural guarantees while removing mesh dependency.

% ============================================================
\section{Background and Problem Setting}
\label{sec:prelim}

We consider steady-state physical systems governed by a PDE operator $\mathcal{G}$ and boundary operator $\mathcal{B}$ of the form
\begin{equation}
   \label{eq:PDE_operator}
    \mathcal{G}(u,\mu,f) = 0, \; \textrm{in} \; \Omega \subset \mathbb{R}^d,
    \qquad\qquad
    \mathcal{B}(u) = u_b \; \textrm{on} \; \partial \Omega
\end{equation}
where $u:\Omega\to\R^{N_F}$ is the state of a conserved physical field with $N_F$ components, $f:\Omega\to\R^{N_F}$ is an external forcing, $u_b:\partial\Omega\to \R^{N_F}$ is boundary data, and $\mu$ is a scalar, vector, or tensor-valued physical parameter field.
Given a point cloud $\mathcal{X}=\{x_i\}_{i=1}^N\subset\Omega\subset\mathbb{R}^d$ with fill distance $h$, boundary nodes $\mathcal{X}_B = \mathcal{X}\cap\partial\Omega$, and interior nodes $\mathcal{X}_I = \mathcal{X}\setminus\mathcal{X}_B$, let $z=\{\Omega, \mu,f,u_b\}$ denote the problem specification (geometry, spatially varying fields, source terms, and boundary conditions).
We aim to recover a learned version $\mathcal{G}$ from a training set of $N_{\text{train}}$ instances $\mathcal{D}=\{(\mathcal{X}^{(k)},z^{(k)},u_{\textrm{data}}^{(k)})\}_{k=1}^{N_{\text{train}}}$, such that solutions generated by solving the learned model reproduce the training data and generalize to test point clouds that may differ in resolution, domain shape, and physical parameters.

\noindent \textbf{DEC notation.} We summarize here required notation; see Appendix~\ref{app:dec_background} for an introductory DEC primer and \citep{hirani2003discrete, trask2022enforcing} for detailed treatments. A $k$-cochain $\sigma\in C^k$ represents an integrated physical quantity over oriented $k$-cells (points, edges, faces). Coboundaries $d_k:C^k\to C^{k+1}$ encode topology via incidence matrices and satisfy $d_{k+1}\circ d_k=0$ (discrete Stokes theorem); Hodge stars $M_k$ are diagonal, positive matrices encoding metric information (volumes, lengths, areas). We discretize the conservation law $-\nabla\cdot\boldsymbol{\sigma}=f$ as $d_0^\top M_1\sigma = M_0 f$, for 1-cochain $\sigma$ and 0-cochain $f$. Boundary handling (Appendix~\ref{app:bcs}) excludes boundary nodes from the residual equation, so for any $\sigma$ and $M_1$ the interior-summed residual equals the net boundary flux and vanishes under zero-flux boundary data, expressing discrete global conservation. For the current work we focus on $k=0,1$, for which the relevant codifferential is $\delta_1 = M_0^{-1}d_0^\top M_1$; in the classical setting $M_0$ and $M_1$ would be diagonal matrices of dual cell volumes and primal/dual edge-measure ratios, respectively.

% ============================================================
\section{Method}
\label{sec:method}

The following pipeline summarizes the full construction bridging point clouds to learned models. Sections~\ref{sec:mfdec} and~\ref{sec:learned} define subproblems. The entire pipeline is end-to-end differentiable, allowing gradients to propagate from the solution error through to the geometric representation. 
\begin{equation}
    \label{eq:pipeline}
    \underbrace{\mathcal{X}}_{{\text{point cloud}}}
    \;\xrightarrow{\;\text{\S\ref{sec:mfdec}}\;}\;
    \underbrace{(d_0,\,M_0,\,M_1)}_{\text{MEEC complex}}
    \;\xrightarrow{\;\text{\S\ref{sec:learned}}\;}\;
    \underbrace{(\xi_e,\,\mathcal{K}_\theta)}_{\text{local flux model}}
    \;\xrightarrow{\;\text{solve}\;}\;u
\end{equation}

% ============================================================
\subsection{Meshfree exterior calculus (MEEC)}
\label{sec:mfdec}
\begin{wrapfigure}{r}{0.38\textwidth}
    \vspace{-\intextsep}
    \centering
    \includegraphics[width=\linewidth]{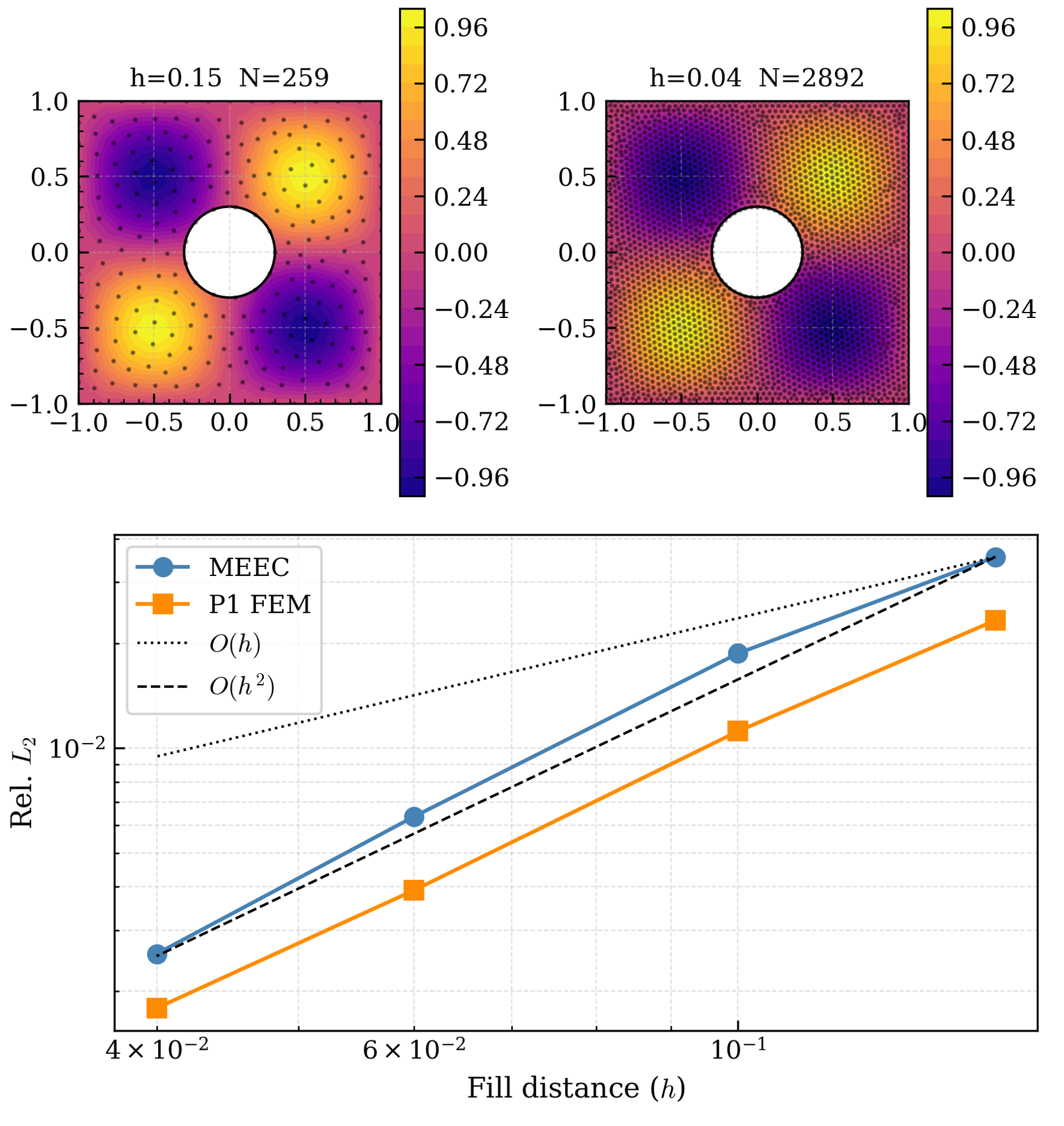}
    \caption{Convergence under mesh refinement for the Poisson equation. Both our meshless DEC and conventional FEM exhibit $O(h^2)$ solution convergence, with comparable error constants.}
    \label{fig:poisson_conv}
\end{wrapfigure}
Given $\mathcal{X}$ sampling $\Omega$, we construct edges defining an $\epsilon$-ball graph, $E=\{(i,j):i<j,\;\|x_i-x_j\|<\epsilon\}$.
For oriented edge $e=(i,j)$ we define displacement $\delta x_e = x_j - x_i \in \mathbb{R}^d$, length $r_e = \|\delta x_e\|$, midpoint $\bar x_e = \tfrac12(x_i+x_j)$, unit tangent $\hat e_e = \delta x_e / r_e$, and orientation $s_e(k)=+1$ if $k=j$, $-1$ if $k=i$, and $0$ otherwise.
We associate virtual volumes $m_i$ and areas $a_e$ with nodes and edges, posing the following ansatz that mimics conventional DEC and defines a discrete de~Rham complex on the point cloud:
\begin{equation}
    (\delta_1 \sigma)_i = \frac{1}{m_i}\sum_{e \sim i} s_e(i)\ a_e \int_{e} \sigma \cdot d\ell,
    \quad
    C^0(V)\;\mathrel{\overset{d_0}{\underset{\delta_1}{\rightleftarrows}}}\;C^1(E).
    \label{eq:complex}
\end{equation}
Equivalently, with $M_0 = \operatorname{diag}(m_i)$, $M_1 = \operatorname{diag}(a_e)$, and $(d_0 u)_e := u_j - u_i$, this reads $\delta_1 \sigma = M_0^{-1}d_0^\top M_1 \sigma$.
Given a compactly supported RBF kernel $\phi(r) = (1-r/\epsilon)^2_+$, set $\kappa_i = \sum_{e \sim i}\phi(r_e)$ and define virtual volumes by $m_i = \frac{1/\kappa_i}{\sum_j 1/\kappa_j}|\Omega|$ for $i\in\mathcal{X}_I$, noting that $\sum_i m_i = |\Omega|$. We do not associate volume with nodes on the boundary. To define virtual areas, we apply Mirzaei's LPR theory \citep{mirzaei2012generalized} to the divergence operator. A discrete stencil $D^\alpha_h u_i = \sum_j \alpha_{ij}\, u_j$ provides a \emph{local polynomial reproduction of degree $m$} on $\Omega\subset\mathbb{R}^d$ if there exist constants $h_0, C_1, C_2 > 0$ such that for every point cloud $X\subset\Omega$ with fill distance $h\le h_0$:
\begin{enumerate}[label=(\roman*),leftmargin=*,topsep=2pt,itemsep=1pt]
    \item \emph{Polynomial reproduction}: $\sum_j \alpha_{ij}\, p(x_j) = (D^\alpha p)(x_i)$ for all $p\in\mathbb{P}_m^d$.
    \item \emph{Boundedness}: $\sum_j |\alpha_{ij}| \le C_1\, h^{-|\alpha|}$.
    \item \emph{Locality}: $\alpha_{ij} = 0$ whenever $\|x_j-x_i\| > C_2\, h$.
\end{enumerate}
Mirzaei's main result \citep[Theorem 4.3]{mirzaei2012generalized} bounds the truncation error of any such stencil: if $\{\alpha_{ij}\}$ satisfies (i)--(iii) for some $m$, then $|(D^\alpha u)(x_i) - D^\alpha_h u_i| \le C\, h^{m+1-|\alpha|}\, |u|_{C^{m+1}}$ for $u\in C^{m+1}$. We restate the classical theorem formally as Theorem~\ref{thm:lpr} in Appendix~\ref{app:lpr_proof}, where we also specialize it to $\delta_1$.

For $\delta_1$, the LPR conditions reduce at each interior node to linear equalities in $\{a_e\}$ (Appendix~\ref{app:qp}). We enforce them via the equality-constrained quadratic program
\begin{equation}
    \min_{a \in \mathbb{R}^{|E|}} \frac{1}{2} \sum_e a_e^2 / \phi_e \qquad \text{s.t.} \qquad \delta_1 p(x_i) = -\nabla\cdot p(x_i) \; \forall p\in{(\mathbb{P}_1^d)}^d,\; \forall i.
    \label{eq:opt}
\end{equation}
Condition (i) is enforced strongly, (iii) holds via kernel support, and (ii) is minimized; the unique minimizer is $a = \Phi B^\top \lambda$ with $\lambda$ solving the sparse Schur complement system $S\lambda = c$, $S := B\Phi B^\top$, $\Phi := \operatorname{diag}(\phi_e)$. Theorem~\ref{thm:lpr} then yields an $O(h)$ truncation rate on $\delta_1$, with a direct proof in Appendix~\ref{app:lpr_proof}. Optionally, enforcing $a_e \ge 0$ in the optimization guarantees $M_1$ induces a valid Hodge star at the cost of differentiability; though we observe some negative $a_e$, we observe no performance difference and leave it off by default. 

\noindent\textbf{Properties.} The construction provides exact discrete conservation in the sense of Section~\ref{sec:prelim} (interior-summed residual equals net boundary flux for any $\sigma$, by the algebraic identity $d_0\mathbf{1}=0$), $O(h)$-consistent divergence (Theorem~\ref{thm:lpr}), and end-to-end differentiability (Corollary~\ref{cor:area-regularity}).

\noindent\textbf{Hodge Laplacian.} Following standard DEC \citep[Thm.~3.5]{trask2022enforcing}, the Hodge Laplacian $\Delta_0 := \delta_1 d_0 = M_0^{-1}d_0^\top M_1 d_0$ is symmetric positive semidefinite in the $M_0$-inner product with a constant vector null-space; consistency reduces to that of $\delta_1$ since $d_0 u$ is exact in the cochain sense. Figure~\ref{fig:poisson_conv} reports $O(h^2)$ \emph{solution-level} convergence on a Dirichlet Poisson problem with error constants comparable to FEM; boundary handling is detailed in Appendix~\ref{app:bcs}.

% ============================================================
\subsection{Learned conservation law}
\label{sec:learned}
We seek a conservation law $-\nabla\cdot\boldsymbol{\sigma}(u;\mu)=f$ where the constitutive relation $\boldsymbol{\sigma}$ is unknown. Boundary conditions are enforced by prescribing $u$ or $\boldsymbol{\sigma}$ on $\mathcal{X}_B$, and we pose the ansatz
\begin{equation}
  \label{eq:system}
  \epsilon\,d_0^\top M_1 d_0\,u \;+\; d_0^\top M_1\mathcal{F}_\theta(u;\mu) = M_0 f,
\end{equation}
where $\mathcal{F}_\theta(u;\mu)\in\mathbb R^{|E|\times N_F}$ is the learned flux 1-cochain and $\epsilon>0$ is a fixed background diffusion that ensures well-posedness. 
Conservation holds for any $\theta$ by the topological identity of Section~\ref{sec:prelim}. Learning is restricted to identifying the constitutive law through $\mathcal{F}_\theta$, stabilized by $\Delta_0$.

We define the learned flux through a local first-order constitutive law depending on $u$ and $\nabla u$ at each edge. For each \(e=(i,j)\), we construct invariant edge-frame features \(\xi_e\) by projecting scalar, vector, and tensor quantities from the physical field and conditioning variables into the local oriented edge frame. We decompose the per-node state and parameters into channels $u_i = \{q_i^m\}_{m=1}^{N_u}$ and $\mu_i = \{p_i^m\}_{m=1}^{N_\mu}$, with each $q_i^m, p_i^m \in \R,\; \R^d,\text{ or }\R^{d\times d}$.
We construct the inputs by stacking the edge-projected features over these fields as \(\xi_e = \operatorname{stack}_{a\in u,\mu} \Pi_e\left( a \right)\) where $\Pi$ is the appropriate projection operation (see Appendix~\ref{app:pullback}). For a scalar field \(p\) this is,
\[
  \xi_e^p := \left( \frac{p_i+p_j}{2}, \frac{p_j-p_i}{r_e}\right)
  =
  \left(p(\bar x_e),\nabla p(\bar x_e)\cdot \hat e_e\right)+O(r_e^2),
\]
where \(\bar x_e\), \(r_e\), and \(\hat e_e\) are the edge midpoint, length, and unit tangent. Vector and tensor fields are handled analogously by contracting their components and edge-directional derivatives with the local edge frame; the full construction is given in Appendix~\ref{app:pullback}. Thus \(\xi_e\) is a consistent edge-frame approximation of the local first-order PDE variables \((u,\nabla u)\).

The learned constitutive law \(\mathcal K_\theta\), instantiated as a feedforward MLP with Tanh activations (architecture details in Appendix~\ref{app:arch}), maps these edge features to a scalar, or componentwise vector-valued, flux density along the oriented edge; we write $\hat\sigma_\theta$ for the continuum one-form induced by $\mathcal K_\theta$ via the edge-frame map. The corresponding discrete edge flux is defined by midpoint quadrature:
\[
  \mathcal F_\theta(u;\mu)_e
  =
  r_e\,\mathcal K_\theta(\xi_e)
  =
  \int_e \hat\sigma_\theta\ + O(r_e^3). 
\]
The factor \(r_e\) is the midpoint quadrature approximation of the edge integral, which defines \(\mathcal F_\theta(u;\mu)_e\) as a valid discrete \(1\)-cochain.

\begin{proposition}[Edge-frame invariance]
\label{prop:invariance}
By construction, $\xi_e$ depends only on $r_e$, $\hat{e}_e$, and frame projections onto $(\hat{e}_e,\hat{n}_e)$, hence is invariant under translation and global rotation.
\end{proposition}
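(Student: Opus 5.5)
The plan is to verify invariance directly from the form of the edge features, handling translations and rotations separately and working channel by channel, since $\xi_e$ is a stack of per-channel projections $\Pi_e(a)$.

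\emph{Translations.} Under $x\mapsto x+t$, the displacement $\delta x_e=x_j-x_i$ is unchanged, so $r_e$, $\hat e_e$, and any companion normal $\hat n_e$ built from $\hat e_e$ are unchanged. The nodal values $q_i^m,p_i^m$ are attached to nodes and not to absolute positions. Each component of $\xi_e$ is formed from nodal values together with $r_e,\hat e_e,\hat n_e$. It is therefore unchanged.

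\emph{Rotations.} Take a global $Q\in SO(d)$ acting as $x\mapsto Qx$. The fields transform by their tensor type:
\begin{itemize}
\item a scalar is unchanged, $p\mapsto p$;
\item a vector transforms as $v\mapsto Qv$;
\item a tensor transforms as $T\mapsto QTQ^\top$.
\end{itemize}
The geometric quantities transform as follows.
\begin{itemize}
\item $\delta x_e\mapsto Q\delta x_e$, so $r_e=\|Q\delta x_e\|=\|\delta x_e\|$ by orthogonality, and $\hat e_e\mapsto Q\hat e_e$.
\item The frame vector $\hat n_e$ must be constructed equivariantly, $\hat n_e\mapsto Q\hat n_e$. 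For $d=2$ this holds because $\hat n_e=R_{90}\hat e_e$ and $R_{90}$ commutes with $SO(2)$. For $d=3$ I will rely on the construction in Appendix~\ref{app:pullback}, assuming it defines the normal directions from rotation-equivariant data.
\end{itemize}

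Next I check each feature type.
\begin{itemize}
\item \emph{Scalars.} The features $\big((p_i+p_j)/2,\,(p_j-p_i)/r_e\big)$ are invariant because $p_i,p_j$ are scalars and $r_e$ is invariant.
\item \emph{Vectors.} The features are projections such as $v\cdot\hat e_e$, $v\cdot\hat n_e$ and their edge differences divided by $r_e$. Each satisfies $(Qv)\cdot(Q\hat e_e)=v\cdot\hat e_e$, using $Q^\top Q=I$.
\item \emph{Tensors.} The features are contractions $\hat a^\top T\hat b$ with $\hat a,\hat b\in\{\hat e_e,\hat n_e\}$. Each satisfies $(Q\hat a)^\top QTQ^\top(Q\hat b)=\hat a^\top T\hat b$.
\end{itemize}
Stacking preserves invariance, so $\xi_e$ is invariant.

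\emph{Main obstacle.} The substantive step is the equivariance of $\hat n_e$, and more generally that $\Pi_e$ uses only frame-equivariant directions. In $d=3$ no continuous choice of normal is equivariant for every edge. If the appendix fixes the normal by a reference axis, invariance holds only up to the residual $SO(d-1)$ freedom about $\hat e_e$. I would try to close this in one of two ways. The first is to show the features depend only on rotation-invariant combinations in the normal plane, such as norms of normal components or traces. The second is to restrict the claim to frames defined by the appendix's equivariant construction.

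A further point concerns orientation. Reversing $e$ flips $\hat e_e$ and negates the odd features. This does not affect the global-rotation claim, which is all the statement asserts.
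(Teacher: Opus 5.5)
Your argument is correct and takes the same approach as the paper, just written out explicitly. The paper gives no proof beyond ``by construction'', meaning that $\xi_e$ is built from $r_e$ and from projections onto the 2D frame $\hat n_e = R_{\pi/2}\hat e_e$ of Appendix~\ref{app:pullback}, and your translation and rotation checks verify exactly this.

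Your $d=3$ caveat is well founded, since the paper leaves the 3D normal construction unspecified, but the obstruction comes from the rotations about $\hat e_e$, not from continuity. Such rotations fix $\hat e_e$ and move every unit normal, so no normal determined by $\hat e_e$ alone (continuous or not) can be equivariant. Invariance in 3D therefore needs normal-plane invariants or a frame built from additional equivariant data, such as neighboring displacements or the field values.
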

\begin{proposition}[Consistency of the learned edge flux]
\label{prop:flux_consistency}
Let $\hat\sigma=\hat\sigma(u,\nabla u)$ be a smooth first-order continuum one-form in the class represented by the edge-frame variables $\xi(x,\hat e)$, and let $\mathcal{K}^\star$ denote its scalar edge density, so that $\hat\sigma(x)(\hat e)=\mathcal{K}^\star(\xi(x,\hat e))$. By consistency of the edge encoder, $\xi_e=\xi(\bar x_e,\hat e_e)+O(r_e^2)$. If $\mathcal K_\theta$ is Lipschitz with uniform density error $\gamma_\epsilon := \sup_\xi |\mathcal K_\theta(\xi)-\mathcal{K}^\star(\xi)|$, then the learned edge cochain $\mathcal F_\theta(u;\mu)_e = r_e\,\mathcal K_\theta(\xi_e)$ satisfies
\[
    \mathcal F_\theta(u;\mu)_e = \int_e \hat\sigma + O(r_e^3) + r_e\,\gamma_\epsilon.
\]
Hence $\mathcal F_\theta(u;\mu)$ is consistent with the target continuum one-form, with kernel error $\gamma_\epsilon$.
\end{proposition}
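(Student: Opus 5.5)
The plan is a three-term telescoping decomposition of the edge cochain error. We write
\[
\mathcal F_\theta(u;\mu)_e - \int_e \hat\sigma
= r_e\bigl[\mathcal K_\theta(\xi_e)-\mathcal K^\star(\xi_e)\bigr]
+ r_e\bigl[\mathcal K^\star(\xi_e)-\mathcal K^\star(\xi(\bar x_e,\hat e_e))\bigr]
+ \Bigl[r_e\,\hat\sigma(\bar x_e)(\hat e_e) - \int_e \hat\sigma\Bigr].
\]
The third bracket uses the identity $\hat\sigma(x)(\hat e)=\mathcal K^\star(\xi(x,\hat e))$. We then bound each term separately.

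The first term is at most $r_e\gamma_\epsilon$ in absolute value, directly from the definition of $\gamma_\epsilon$ as a supremum over feature space.

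The second term needs a Lipschitz bound on $\mathcal K^\star$, not on $\mathcal K_\theta$. We get it from the smoothness of $\hat\sigma$: $\mathcal K^\star$ is smooth on the bounded feature range, hence locally Lipschitz with some constant $L^\star$. Combined with the encoder consistency $\xi_e=\xi(\bar x_e,\hat e_e)+O(r_e^2)$, this gives a contribution $r_e L^\star O(r_e^2)=O(r_e^3)$.

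If one prefers to use the stated Lipschitz hypothesis on $\mathcal K_\theta$, the order of the split can be swapped. Compare $\mathcal K_\theta(\xi_e)$ with $\mathcal K_\theta(\xi(\bar x_e,\hat e_e))$ using $\mathrm{Lip}(\mathcal K_\theta)$, which gives $O(r_e^3)$. Then compare $\mathcal K_\theta$ with $\mathcal K^\star$ at the exact feature, which gives $r_e\gamma_\epsilon$. Either ordering produces the same two contributions. I would present the $\mathcal K_\theta$-Lipschitz version, since that matches the hypothesis as stated.

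The third term is the standard midpoint rule error. Parametrize $e$ by $x(t)=\bar x_e+t\,\hat e_e$ for $t\in[-r_e/2,r_e/2]$, so that $\int_e\hat\sigma=\int_{-r_e/2}^{r_e/2} g(t)\,dt$ with $g(t)=\hat\sigma(x(t))(\hat e_e)$. Taylor-expand $g$ about $t=0$. The linear term integrates to zero by symmetry, leaving a remainder $\tfrac{r_e^3}{24}g''(\tau)=O(r_e^3)$. Here $g''$ is bounded because $\hat\sigma(u,\nabla u)$ is a smooth composition, given $u\in C^3$ locally.

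The main obstacle is making the constants uniform, so that each $O(\cdot)$ is independent of the edge. This requires the edge features to lie in a compact set on which $\mathcal K^\star$ and $\mathcal K_\theta$ have uniform Lipschitz and derivative bounds. It also requires that the encoder's $O(r_e^2)$ constant depends only on $\|u\|_{C^3}$ and $\|\mu\|_{C^3}$ near $e$. I would state these as standing regularity assumptions: $r_e<\epsilon$, and features stay in the training range. Summing the three bounds then gives the claimed expansion.
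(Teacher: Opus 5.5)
Your proposal is correct. In the $\mathcal K_\theta$-Lipschitz ordering you say you would present, it is essentially the paper's argument: encoder consistency plus Lipschitz continuity of $\mathcal K_\theta$ give $\mathcal K_\theta(\xi_e)=\hat\sigma(\bar x_e)(\hat e_e)+O(r_e^2)+O(\gamma_\epsilon)$, and multiplying by $r_e$ and applying the midpoint rule gives the result. Your explicit three-term split (which yields constant $1$ on the $r_e\gamma_\epsilon$ term), the explicit midpoint-rule remainder, and the remark on uniform constants over a compact feature range spell out what the paper's two-line proof leaves implicit.
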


\begin{proof}
By encoder consistency and Lipschitz continuity of the learned model,
\[
    \mathcal K_\theta(\xi_e) = \hat\sigma(\bar x_e)(\hat e_e) + O(r_e^2) + O(\gamma_\epsilon).
\]
Multiplying by \(r_e\) and using midpoint quadrature,
\(\int_e\hat\sigma=r_e\hat\sigma(\bar x_e)(\hat e_e)+O(r_e^3)\),
gives the result.
\end{proof}

\begin{wrapfigure}{r}{0.38\textwidth}
    \vspace{-\intextsep}
    \centering
    \includegraphics[width=\linewidth]{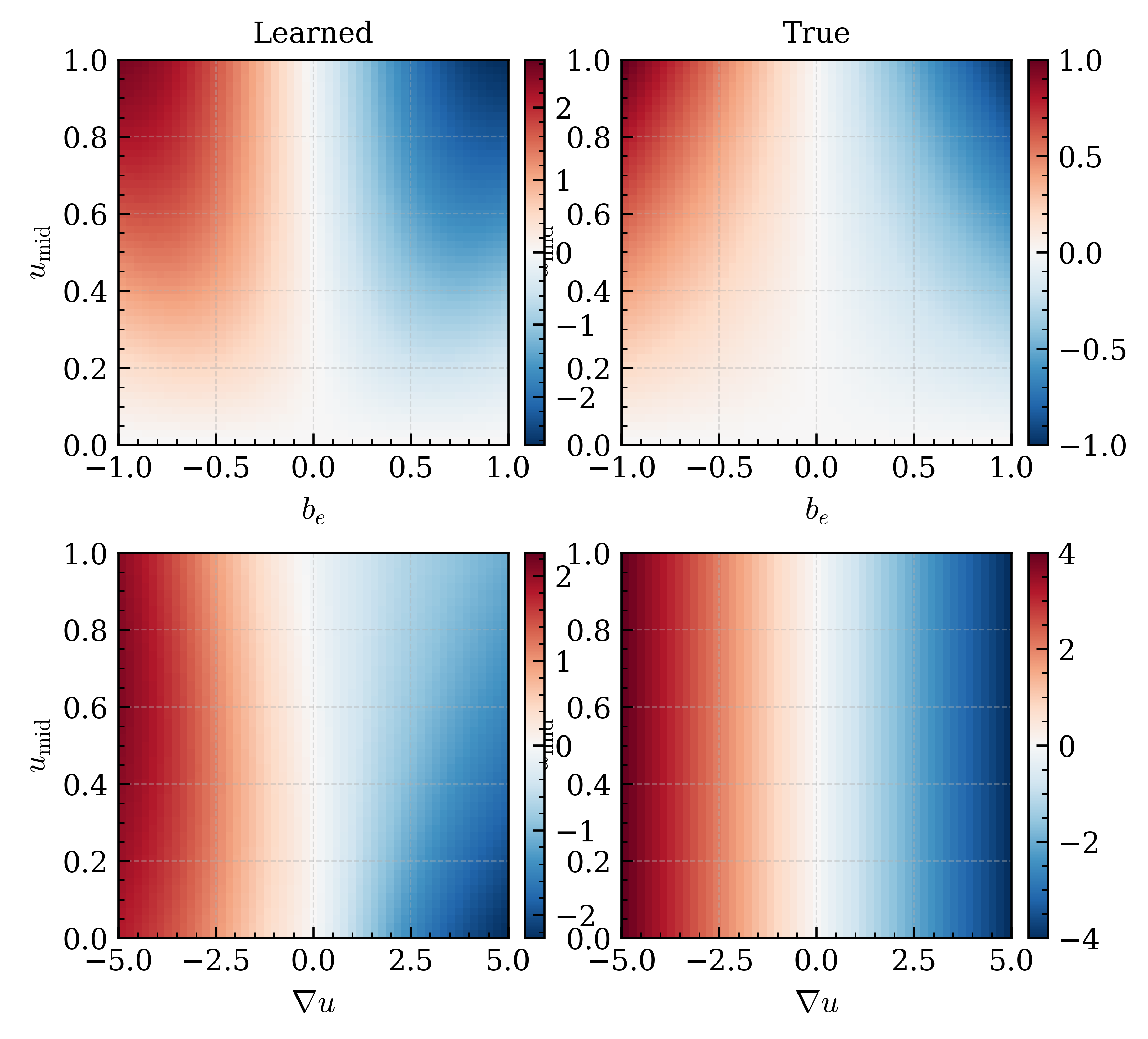}
    \caption{Learned flux field for advection-diffusion recovered from a single training solution, where $b$ is the edge aligned velocity input. The plots are shown along slices corresponding to $b=0$ and $\nabla u=0$. The model qualitatively identifies the advective and diffusive contributions across the edge-feature space, enabling generalization to unseen geometries, boundary conditions, and advection velocities without retraining. See Remark~\ref{rem:identifiability} for comments on flux identifiability.}
    \label{fig:flux_recovery1}
\end{wrapfigure}
Proposition~\ref{prop:flux_consistency} is the mechanism behind generalization within a PDE family. Provided $\mathcal{K}_\theta$ approximates the target continuum density $\mathcal{K}^\star$ on a compact feature set $\Xi$, the same learned law induces a consistent edge cochain on any $\mathcal{X}$ whose features lie in $\Xi$. The remaining requirement is that the training data sufficiently identify $\mathcal{K}^\star$ over that range (see Remark~\ref{rem:identifiability}). Our experiments show that this can be achieved from as few as a single training instance, enabling the data-efficient generalization demonstrated in Section~\ref{sec:results}.

% ============================================================
\subsection{Analysis of learned model}
\label{sec:theory}

Sections~\ref{sec:mfdec} and~\ref{sec:learned} together define system~\eqref{eq:system}, an end-to-end trainable conservation law on any point cloud. The central result decomposes the solution error into a discretization residual $C_{\mathrm{disc}}\,h$  following Theorem~\ref{thm:lpr} and a bounded kernel error $\gamma_\epsilon$, with constants depending upon a fixed regularity class of point clouds and feature distributions in $\Xi$. 

Let $K:=\epsilon d_0^\top M_1 d_0$, $b:=M_0 f$, and $N_\theta(u;\mu):=d_0^\top M_1\mathcal F_\theta(u;\mu)$, so the learned residual is $G_\theta(u;\mu):=Ku+N_\theta(u;\mu)-b$ with $G_\theta(u_\theta;\mu)=0$. 

\begin{proposition}[Well-posedness and stability]
\label{prop:stability}
Suppose $K$ is positive definite on the interior $\mathcal{X}_I$ and $N_\theta$ is Lipschitz in $u$ at fixed $\mu$ with constant $C_\theta$. If $\tau := \|K^{-1}\|C_\theta < 1$, then $G_\theta(\cdot;\mu)=0$ has a unique solution $u_\theta$ and
\[
    \|u_\theta - v\| \le C_{\mathrm{stab}}(h)\, \|G_\theta(v;\mu)\|, \qquad C_{\mathrm{stab}}(h) := \tfrac{\|K^{-1}\|}{1-\tau},
\]
for any $v$ matching the Dirichlet data. By analogy with the continuous Lax-Milgram bound $\|K^{-1}\| \le 1/(\epsilon\,\alpha_P)$ via the discrete Poincar\'e constant $\alpha_P$, the contraction $\tau<1$ holds when $C_\theta < \epsilon\,\alpha_P$, i.e., the learned-flux Lipschitz constant is small relative to the diffusive coercivity; discrete operator norms in the MEEC setting may carry residual $h$-dependence, hence the notation $C_{\mathrm{stab}}(h)$ (see Appendix~\ref{app:stability_consistency}).
\end{proposition}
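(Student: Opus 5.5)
The argument reduces the nonlinear problem to a fixed point of a contraction on the affine space of Dirichlet-compatible states, and then reads the stability estimate off the same contraction.

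\textbf{Setup.} Fix $\mu$ and restrict to the interior unknowns. Write any admissible state as $u=(u_I,u_B)$ with $u_B$ the prescribed Dirichlet data. Interpret $K$, $N_\theta$ and $G_\theta$ as their interior rows acting on $u_I$, with boundary contributions moved into the right-hand side. By hypothesis $K$ is positive definite on $\mathcal{X}_I$, so $K^{-1}$ exists on that block. Define
\[
T(u) := K^{-1}\bigl(b - N_\theta(u;\mu)\bigr).
\]
Then $G_\theta(u;\mu)=0$ holds if and only if $u=T(u)$.

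\textbf{Existence and uniqueness.} The Lipschitz bound on $N_\theta$ gives
\[
\|T(u)-T(w)\|\le \|K^{-1}\|\,C_\theta\,\|u-w\| = \tau\|u-w\|.
\]
Since $\tau<1$, the Banach fixed-point theorem on the finite-dimensional complete affine space yields a unique fixed point $u_\theta$.

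\textbf{Stability estimate.} Take any $v$ matching the Dirichlet data. Then $K v - b + N_\theta(v;\mu)= G_\theta(v;\mu)$, so
\[
v - T(v) = K^{-1} G_\theta(v;\mu).
\]
Using $u_\theta=T(u_\theta)$,
\[
\|u_\theta - v\| \le \|T(u_\theta)-T(v)\| + \|T(v)-v\| \le \tau\|u_\theta-v\| + \|K^{-1}\|\,\|G_\theta(v;\mu)\|.
\]
Rearranging gives $\|u_\theta-v\|\le \frac{\|K^{-1}\|}{1-\tau}\|G_\theta(v;\mu)\|$, which is the claimed bound with $C_{\mathrm{stab}}(h)$. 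An equivalent route is to write $G_\theta(u_\theta)-G_\theta(v)=K(u_\theta-v)+N_\theta(u_\theta)-N_\theta(v)$ and apply a Neumann-series bound.

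\textbf{Main obstacle: the coercivity remark.} This step is heuristic rather than a theorem. I would first argue that $K=\epsilon\, d_0^\top M_1 d_0$ is symmetric, and positive semidefinite when all $a_e\ge0$. Its only null vector is the constant vector, which the Dirichlet rows eliminate; this gives $\lambda_{\min}(K)\ge \epsilon\,\alpha_P$ with $\alpha_P$ a discrete Poincar\'e constant. In the Euclidean norm, $\|K^{-1}\|=1/\lambda_{\min}(K)$, and then $C_\theta<\epsilon\alpha_P$ implies $\tau<1$.

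Two points are delicate. First, the $a_e$ may be negative, so positive definiteness has to be kept as an assumption rather than derived. Second, the scaling of $\alpha_P$ with $h$ (the unweighted Euclidean versus $M_0$-weighted norms) is not controlled here. I would therefore state the remark only as an analogy and defer $h$-uniform bounds to the appendix, hence the notation $C_{\mathrm{stab}}(h)$.
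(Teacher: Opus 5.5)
Your proof is correct. Its stability step is the paper's argument: the appendix writes $K(u_\theta-v)=-G_\theta(v)-D[F_\theta(u_\theta;\mu)-F_\theta(v;\mu)]$, takes norms and absorbs the $\tau\|u_\theta-v\|$ term, which is your ``equivalent route'' and gives the same inequality as your identity $v-T(v)=K^{-1}G_\theta(v;\mu)$. The paper never proves existence and uniqueness (it simply assumes $G_\theta(u_\theta)=0$), so your Banach fixed-point argument for $T(u)=K^{-1}(b-N_\theta(u;\mu))$ supplies that missing part, and your heuristic reading of the $C_\theta<\epsilon\,\alpha_P$ remark under an assumed positive-definite $K$ matches the paper's ``by analogy'' framing.
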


\begin{theorem}[Generalization error decomposition]
\label{thm:main}
Let $u^\star$ solve the continuum PDE and $I_h u^\star$ its nodal interpolant; let $G_h^\star$ be the residual~\eqref{eq:system} evaluated with the exact-kernel discrete flux $F_h^\star$. Suppose $\|G_h^\star(I_h u^\star;\mu)\| \le C_{\mathrm{disc}}\,h$, and let $\gamma_\epsilon := \sup_{\xi\in\Xi}|\mathcal K_\theta(\xi) - \mathcal K^\star(\xi)|$. Then, with $C_{\mathrm{flux}}$ the discrete-divergence operator norm (Lemma~\ref{lem:flux_to_residual}, Appendix~\ref{app:stability_consistency}),
\[
    \|u_\theta - I_h u^\star\| \le C_{\mathrm{stab}}(h)\!\left( C_{\mathrm{disc}}\,h + C_{\mathrm{flux}}\,\gamma_\epsilon \right).
\]
The discretization-consistency hypothesis follows from Theorem~\ref{thm:lpr} and Proposition~\ref{prop:flux_consistency} applied to the continuum equation. Proof in Appendix~\ref{app:stability_consistency}.
\end{theorem}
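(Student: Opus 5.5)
The plan is a standard consistency-plus-stability argument: apply Proposition~\ref{prop:stability} with the test vector $v := I_h u^\star$, then bound the learned residual $G_\theta(I_h u^\star;\mu)$ by splitting it into the exact-kernel residual and a kernel-mismatch term. Before invoking the stability bound, I will make sure $I_h u^\star$ matches the Dirichlet data on $\mathcal{X}_B$. This holds because the nodal interpolant agrees with $u_b$ at boundary nodes, so $I_h u^\star$ lies in the admissible affine set. Proposition~\ref{prop:stability} then gives
\[
\|u_\theta - I_h u^\star\| \le C_{\mathrm{stab}}(h)\,\|G_\theta(I_h u^\star;\mu)\|,
\]
and the remaining work is to estimate the right-hand side.

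The key algebraic step uses the fact that $G_\theta$ and $G_h^\star$ share the same $K$ and $b$ and differ only in the flux cochain. Writing $F_\theta := \mathcal{F}_\theta(I_h u^\star;\mu)$ and $F^\star := F_h^\star(I_h u^\star;\mu)$, we have
\[
G_\theta(I_h u^\star;\mu) = G_h^\star(I_h u^\star;\mu) + d_0^\top M_1\bigl(F_\theta - F^\star\bigr).
\]
The triangle inequality and the hypothesis $\|G_h^\star(I_h u^\star;\mu)\|\le C_{\mathrm{disc}}h$ dispose of the first term. The second term is handled by Lemma~\ref{lem:flux_to_residual}, which I take to state $\|d_0^\top M_1 \eta\| \le C_{\mathrm{flux}}\,\|\eta\|_{\ast}$ for an edge cochain $\eta$ measured in an appropriate density norm. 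Here $\eta_e = r_e\bigl(\mathcal K_\theta(\xi_e)-\mathcal K^\star(\xi_e)\bigr)$, so each edge contributes at most $r_e\gamma_\epsilon$, exactly as in the kernel-error term of Proposition~\ref{prop:flux_consistency}. Combining the two bounds gives the claimed estimate.

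The main obstacle is the bookkeeping in this last step, which has two parts. First, the bound $|\mathcal K_\theta(\xi_e)-\mathcal K^\star(\xi_e)|\le\gamma_\epsilon$ requires the discrete features $\xi_e$ computed from $I_h u^\star$ and $\mu$ to lie in $\Xi$. I would make this an explicit standing assumption, namely that $\Xi$ contains the features of the point-cloud regularity class, consistent with the discussion after Proposition~\ref{prop:flux_consistency}. Second, the norms must be chosen so that the factor $r_e\le\epsilon$ and the scalings of $M_1$ and $d_0^\top$ are absorbed into $C_{\mathrm{flux}}$ rather than leaving hidden powers of $h$.

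Concretely, I would define $C_{\mathrm{flux}} := \|d_0^\top M_1 R\|_{\infty\to\|\cdot\|}$ with $R=\operatorname{diag}(r_e)$, so that $\|d_0^\top M_1\eta\|\le C_{\mathrm{flux}}\gamma_\epsilon$ holds directly from $\|\eta\|_\infty$-type control on the densities. Finally, I would note that the consistency hypothesis on $G_h^\star$ follows from the $O(h)$ truncation error of $\delta_1$ (Theorem~\ref{thm:lpr}), combined with the $O(r_e^3)$ quadrature and encoder errors of Proposition~\ref{prop:flux_consistency} with $\gamma_\epsilon=0$.
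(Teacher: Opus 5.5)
Your proposal is correct and follows the paper's proof essentially step for step. The paper likewise obtains the residual-stability bound $\|u_\theta - v\|\le C_{\mathrm{stab}}\|G_\theta(v)\|$ (it re-derives this inline from $G_\theta(u_\theta)=0$, the Lipschitz bound and $\tau<1$, rather than citing Proposition~\ref{prop:stability}), takes $v=I_hu^\star$, adds and subtracts the exact-kernel term $DF_h^\star(I_hu^\star;\mu)$, and closes with the triangle inequality, the consistency hypothesis and Lemma~\ref{lem:flux_to_residual}. Your explicit feature-coverage assumption $\xi_e\in\Xi$ and operator-norm definition $C_{\mathrm{flux}}=\|d_0^\top M_1R\|$ make precise what that lemma leaves implicit, though this definition absorbs rather than removes any $h$-dependence in $C_{\mathrm{flux}}$.
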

\begin{remark}[Universal approximation]
\label{rem:universal}
Theorem~\ref{thm:main} decomposes solution error into two contributions: a discretization error $C_{\mathrm{stab}}(h)\,C_{\mathrm{disc}}\,h$ and a nonlinear flux model form error $C_{\mathrm{stab}}(h)\,C_{\mathrm{flux}}\,\gamma_\epsilon$. This establishes that the solution error can be made arbitrarily small if (1) a discrete Lax equivalence theorem~\citep{LaxRichtmyer1956} holds, so that the first term goes to $0$ as $h\to 0$, and (2) MEEC-Net has sufficient capacity and training for the second term to vanish.
\end{remark}

\begin{remark}[Generalization and identifiability from a single solution]
\label{rem:identifiability}
Since $\mathcal{K}_\theta$ depends only on edge-frame features $\xi_e$, not on $z=\{\Omega,\mu,f,u_b\}$, Theorem~\ref{thm:main} transfers the same law to new geometries, boundary data, forcings, and parameters whenever $C_{\mathrm{stab}}(h), C_{\mathrm{disc}}, C_{\mathrm{flux}}$ are uniformly bounded and features lie in $\Xi$. Although a single solution underdetermines the flux up to divergence-free perturbations, the shared MLP class empirically recovers $\mathcal{K}^\star$ from one training instance when training features cover the test distribution (Figure~\ref{fig:flux_recovery1}, \S\ref{sec:results}).
\end{remark}

% ============================================================
\subsection{Implementation}
\label{sec:implementation}

\textbf{Training.}
Given a dataset of one or more problem instances and corresponding target solutions, we minimize $\mathcal{L}(\theta)=\sum_k \| u_\theta^{(k)}-u^{(k)} \|_{2}^2$, where $u_\theta^{(k)}$ solves~\eqref{eq:system} for parameters $\theta$ and instance $k$. Gradients $\partial\mathcal{L}/\partial\theta$ are computed via the implicit function theorem from the converged $u_\theta$.

\textbf{Newton solve and scaling.}
Because~\eqref{eq:system} is nonlinear, each forward pass is a Newton solve; Proposition~\ref{prop:stability} ensures existence, uniqueness, and well-posedness for any $\theta$, provided the necessary stability conditions are met. The Schur system $S\lambda = c$ from~\eqref{eq:opt} inherits the sparsity of the $\epsilon$-ball graph, with $P\times P$ diagonal blocks $S_{ii}$ ($P=d(d+3)/2$, i.e.\ $5$ in 2D, $9$ in 3D) and off-diagonal blocks $S_{ij}$ supported only on graph edges. Jacobian assembly is $O(N)$ on the $\epsilon$-ball graph; the resulting sparse linear system is solved with a sparse direct factorization.

% ============================================================
\section{Results}
\label{sec:results}

We evaluate MEEC-Net in sparse data regimes where local, invariant flux learning is hypothesized to enable single-shot learning. These regimes are common in engineering design, where high-fidelity training data are costly. 
In all experiments we train and evaluate with normalized $L_2$ error. Full hyperparameter specifications and training setups are provided in Appendix~\ref{app:exp_setup}. All baselines are trained on the same train/test splits, point clouds, boundary conditions, forcing terms, and physical parameters as MEEC-Net. We input boundary conditions and forcing terms as additional input channels, along with spatial coordinates and boundary identification (see Appendix~\ref{app:baselines}). Where applicable, baselines are given the same graph connectivity and the same per-node input features, including coordinates, physical parameters, forcing, and boundary-condition indicators. We use comparable training budgets across methods and report each model using the checkpoint with the lowest training error.

\begin{figure}[!t]
    \centering
    \includegraphics[width=1.0\linewidth]{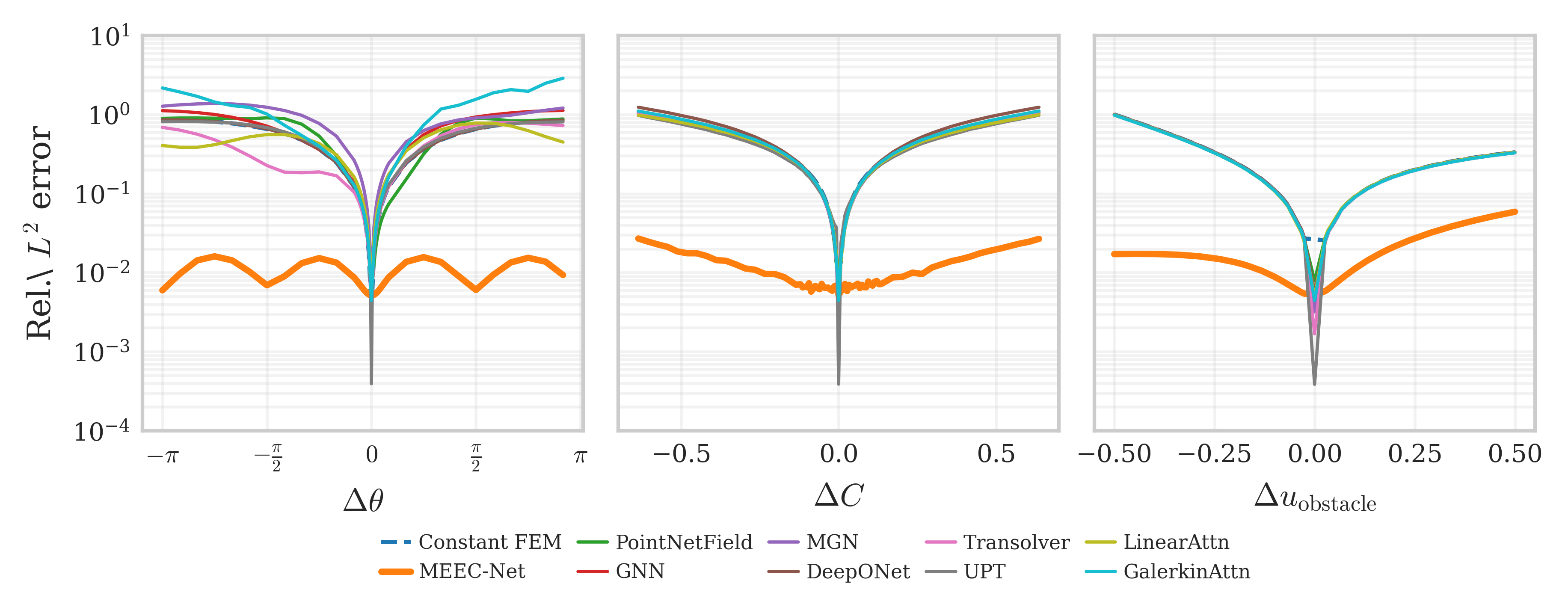}
    \caption{\textbf{Single-shot generalization for advection-diffusion.} Trained from one data point, the learned model generalizes across variations in geometry, physical parameters (velocity), and boundary conditions. While direct prediction baselines may fit the single training sample, they are unable to extrapolate meaningfully. From left to right, $\Delta \theta$ is the variation from the training velocity direction which results in variation in $\mu$, $\Delta C$ is a translation of the coordinate of the obstacle center point along the diagonal as a geometric perturbation, and $\Delta u_{\rm{obstacle}}$ is a change in the boundary value enforced on the obstacle (from $u_{\rm{obstacle}}=1$). } 
    \label{fig:sweeps_v2}
\end{figure}
\noindent\textbf{Single-shot physics recovery.} We demonstrate single-shot physics recovery on canonical PDEs. Specifically, we train on a single solution to the advection-diffusion equation with known BCs and forcing, but unknown governing equation. Figure~\ref{fig:flux_recovery1} shows qualitative flux recovery for this problem, where the analytic flux is reproduced by MEEC-Net. Figure~\ref{fig:sweeps_v2} shows the robustness of this learned model to variations in physical parameters, geometry, and boundary conditions; conventional neural operators fail to generalize in the single-shot setting.

\noindent\textbf{Data-efficient training.} Figure~\ref{fig:data_efficiency} reports OOD relative-$L_2$ error versus $N_{\text{train}}$ (training-set size) for advection-diffusion and linear elasticity. Across both problems MEEC-Net achieves $1$--$2$ orders of magnitude lower error than baselines; baselines need $>\!10$ samples on advection-diffusion before predictions become even mildly informative. MEEC-Net error is nearly constant in $N_{\text{train}}$ on advection-diffusion, evidencing single-shot recovery, while on linear elasticity (where forcing varies more across instances) MEEC-Net error decreases modestly from $N_{\text{train}}=1$ to $N_{\text{train}}=10$.

\begin{figure}[!t]
    \centering
    \includegraphics[width=0.9\linewidth]{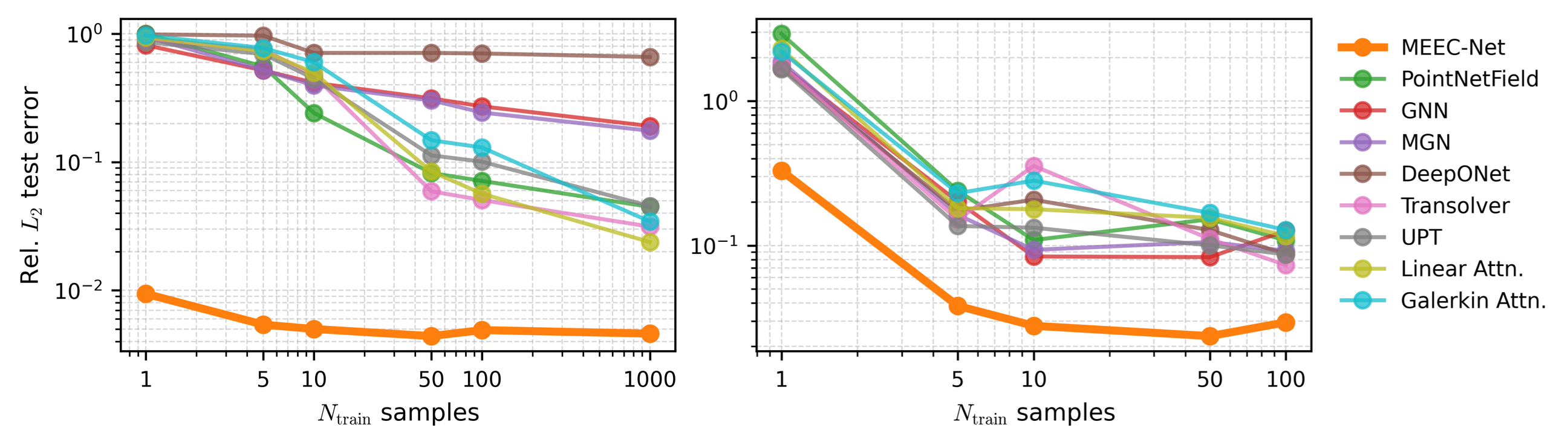}
    \caption{\textbf{Data-efficient training.} Our approach achieves substantially improved data efficiency, shown here for advection-diffusion \textit{(left)} and elasticity \textit{(right)}. MEEC-Net achieves $1$--$2$ orders of magnitude lower error than baselines across all training set sizes, and demonstrates single-shot recovery on advection-diffusion.}
    \label{fig:data_efficiency}
\end{figure}

\noindent\textbf{Canonical steady-state PDEs.} We benchmark MEEC-Net on advection-diffusion, Darcy flow, linear elasticity, and nonlinear variants of the first two (full setup in Appendix~\ref{app:exp_setup}). For each PDE, training and in-distribution testing use a unit square with a single hole; OOD testing uses geometries with multiple variable-sized holes. Each PDE also includes per-instance physical variability with an expanded range for OOD\@: advection direction $\theta$ for AD, log-normal diffusivity amplitude $\kappa$ for Darcy, and prescribed boundary displacement direction for elasticity. Table~\ref{tab:canonical_results} reports ID and OOD relative-$L_2$ errors.

\begin{table}[h]
  \centering
  \footnotesize
  \caption{Normalized $L_2$ errors (\%) on canonical PDE benchmarks in low-data setting with $N_{\text{train}}=100$. MEEC substantially outperforms all baselines in both in-distribution (ID) and out-of-distribution (OOD) evaluations. The varied OOD factors are listed below each experiment.}\label{tab:canonical_results}
  \begin{tabular}{lllllllllll}
    \toprule
    ($1e-2$) & \multicolumn{2}{c}{A.D.} & \multicolumn{2}{c}{Darcy} & \multicolumn{2}{c}{Elas.} & \multicolumn{2}{c}{Nonl. A.D.} &  \multicolumn{2}{c}{Nonl. Darcy} \\
    Method     & ID & OOD & ID & OOD & ID & OOD & ID & OOD & ID & OOD  \\
     &  & $\Omega_g,\mu$ &   & $\Omega_g,\mu$ &   & $\Omega_g,u_b$ &   & $\Omega_g,\mu$ &   & $\Omega_g,\mu$ \\
    \midrule
    PointNet &  \cellcolor{red!10} 7.13 & 45.84 & 5.13 & \cellcolor{red!10} 34.86 & 10.85 & 32.39 & 10.67 & 45.95 & 6.20 & 35.22 \\
    GNN & 27.21 & \cellcolor{red!10} 40.32 & 12.45 & 34.97 & 12.67 & 25.12 & 29.30 & \cellcolor{red!10} 40.96 & 13.97 & \cellcolor{red!10} 26.51 \\ 
    MGN & 24.36 & 41.63 & 11.75 & 31.27 & 9.19 & 14.51 & 31.35 & 46.29 & 12.89 & 36.08 \\
    Transolver & 5.06 & 52.42 & 5.86 & 48.71 & \cellcolor{red!10} 7.47 & 13.65 & \cellcolor{red!10} 9.27 & 50.97 & 6.13 & 48.39 \\
    UPT & 10.08 & 52.36 & 7.97 & 42.50 & 8.68 & 12.90 & 15.38 & 51.89 & 6.58 & 45.44 \\
    Linear Attn. & 5.67 & 52.09 & \cellcolor{red!10} 5.22 & 33.79 & 11.64 & 16.50 & 12.76 & 52.36 & \cellcolor{red!10} 5.50 & 30.32 \\
    Galerkin Attn. & 13.01 & 56.51 & 8.75 & 46.11 & 10.00 & \cellcolor{red!10} 12.84 & 30.74 & 58.02 & 9.26 & 36.37 \\
    \midrule
     \textbf{MEEC-Net} & \cellcolor{blue!5} \textbf{0.49} &  \cellcolor{blue!5} \textbf{1.81} &  \cellcolor{blue!5} \textbf{0.56} &  \cellcolor{blue!5} \textbf{0.57} &  \cellcolor{blue!5} \textbf{2.94} &  \cellcolor{blue!5} \textbf{2.68} &  \cellcolor{blue!5} \textbf{0.64} &  \cellcolor{blue!5} \textbf{1.57} &  \cellcolor{blue!5} \textbf{0.53} &  \cellcolor{blue!5} \textbf{0.71}\\
    \bottomrule
  \end{tabular}
\end{table}

\begin{wrapfigure}{r}{0.38\textwidth}
    \vspace{-\intextsep}
    \centering
    \includegraphics[width=\linewidth]{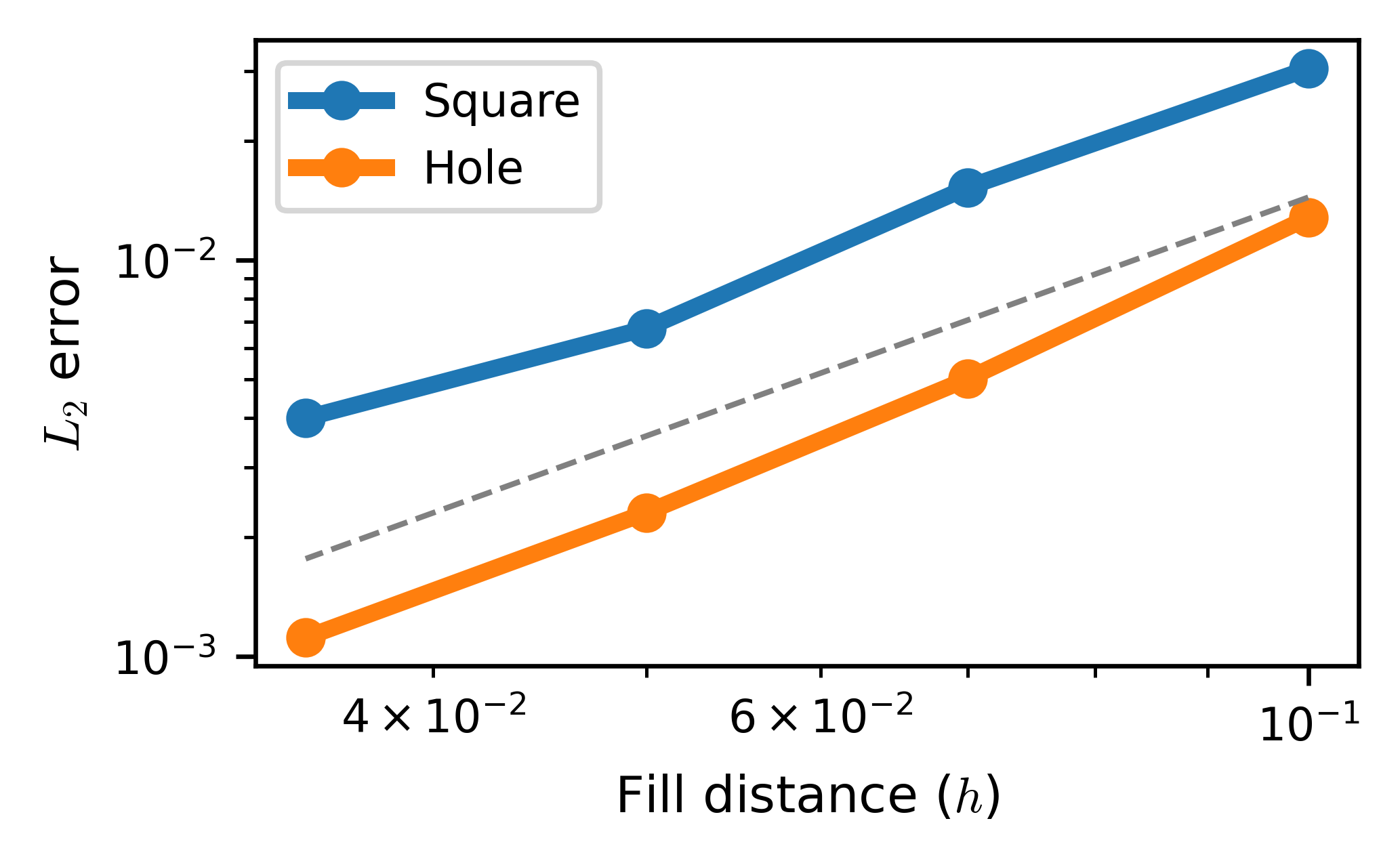}
    \caption{\textbf{Convergence under geometry extrapolation for well trained flux.} Solution error on two unseen geometries for a pre-trained flux map, demonstrating $O(h^2)$ convergence (dashed line) under refinement.}
    \label{fig:learned_flux_convergence}
\end{wrapfigure}
\noindent\textbf{Consistent discretization.} To evaluate discretization error under controlled model error, we consider a problem with a known governing transport law. The flux kernel is pretrained on synthetic feature samples $\xi$, without any PDE solve, to approximate the true constitutive map $\mathcal{K}^\star$ to a prescribed accuracy $\gamma_\epsilon$. We then evaluate the pretrained model across a sequence of discretizations with decreasing $h$, comparing against the manufactured solution (Figure~\ref{fig:learned_flux_convergence}). Without retraining, the method recovers $O(h^2)$ convergence, with accuracy determined solely by $\gamma_\epsilon$ rather than by $h$. This confirms the theoretical separation between discretization and model error, and holds across both domain geometries considered.

\begin{wrapfigure}{r}{0.5\textwidth}
    \centering
    \includegraphics[width=\linewidth]{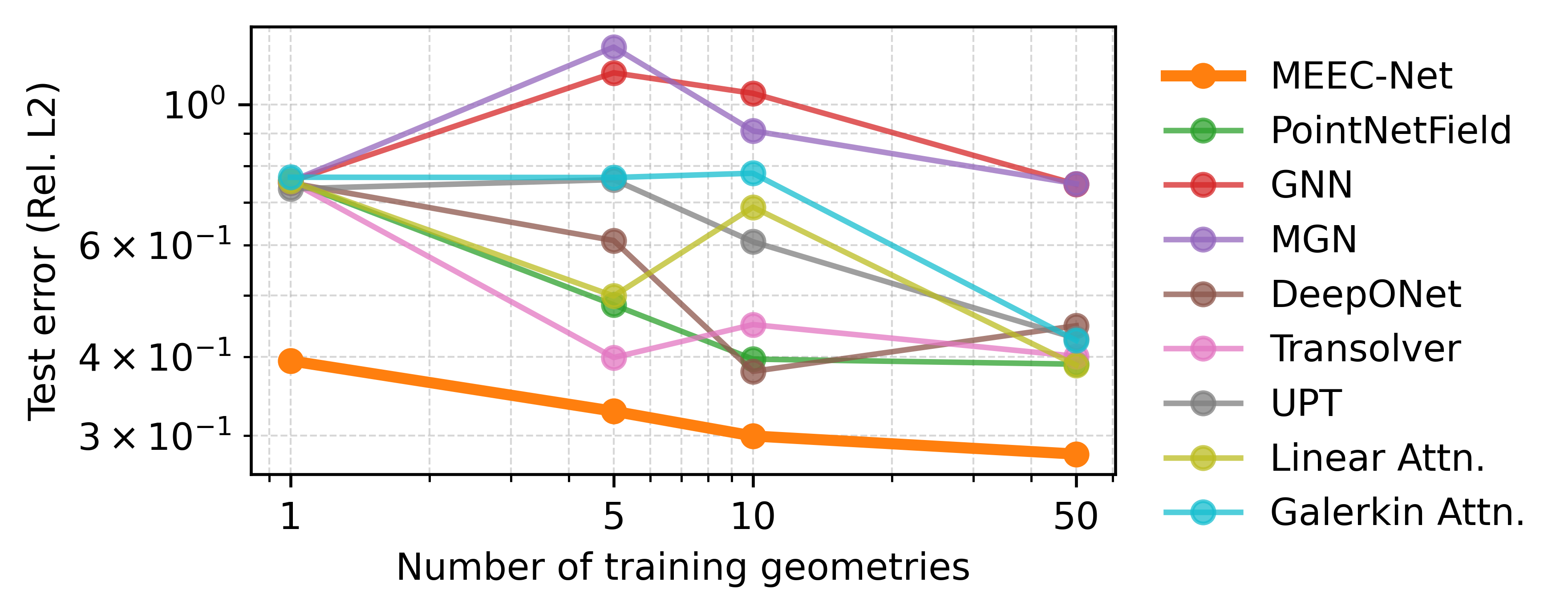}
    \caption{\textbf{Application to complex 3D geometries.} Our approach provides improved performance for the engineering-relevant SimJEB displacement prediction task with much less data than is typically required.}
    \label{fig:JEB_data_efficiency}
\end{wrapfigure}

\noindent\textbf{Model for jet engine bracket.} We evaluate on the SimJEB dataset~\citep{whalen2021simjeb} of $381$ structural bracket geometries from an open engineering design competition, an unusually large corpus for engineering design tasks. We train to predict nodal displacements under a prescribed load and report relative $L_2$ error across held-out geometries. MEEC-Net outperforms all baselines in the low-data regime (Figure~\ref{fig:JEB_data_efficiency}).
% Table~\ref{}, do we want a table?
Methods reporting competitive results on this class of problems typically rely on the DeepJEB augmented dataset~\citep{hong2025deepjeb}, which supplements the original geometries with large volumes of synthetically generated variants, and still require ${\sim}2{,}000$ training samples to reach errors of 20--30\% \citep{rowbottom2026multi, liu2026geometry}. The dependence on such augmentation highlights a fundamental limitation for practical deployment: in real engineering workflows, even a few hundred high-fidelity simulations represent a substantial cost, and further synthetic augmentation may not be feasible. Our method achieves competitive accuracy for this task on a fraction of the original dataset, without augmentation, and accordingly is better aligned with the data-availability constraints of real design problems.

% ============================================================
\section{Conclusion}
\label{sec:conclusion}
We present MEEC and a consistent learnable flux model (MEEC-Net) to learn generalizable physics natively on unstructured point cloud data. We demonstrate single-shot recovery and generalization over geometry, boundary conditions, and forcings on canonical and engineering-relevant design tasks. We develop theoretical error estimates across a given PDE family, based only on the flux recovery and MEEC discretization error, independent of problem specification, which underpins the broad generalization observed empirically. This work shows that the exterior calculus provides scaffolding to separate the local, generalizable constitutive rules from the global, geometry-specific assembly.
% This kind of
This generalization from simple examples to complex emergent behavior is essential for AI tools that reason about and design within the interplay of geometry, physics, and data.

\paragraph{Limitations.} The current construction targets steady-state elliptic problems.
\label{sec:limitations}
The same spatial discretization could be combined with time integration for unsteady problems, but are omitted for simplicity.
As shown in Appendix~\ref{app:results}, the cost of evaluating learned MEEC-Net models is comparable to a conventional solver, and thus provides primary value in data-driven extraction of models; it should not be interpreted as a surrogate providing orders of magnitude speedup, although coarsening techniques like \cite{kinch2025structure} could be trivially applied for acceleration. As discussed in Remark~\ref{rem:shortcomplex}, we work only with the $0$-/$1$-form portion of the de Rham complex, as higher-order forms are prohibitively expensive. The learned constitutive law is restricted to the local first-order class $\mathcal{K}_\theta(u,\nabla u)$, and genuinely nonlocal or higher-derivative closures are not covered. The $O(h^2)$ solution-level convergence reported in Section~\ref{sec:results} is empirical and attributed to elliptic supraconvergence \citep{kreiss1986supraconvergent}; all analysis establishing $O(h)$-convergence is for truncation error only. All bounds presume quasi-uniform point clouds with bounded edge degree; we did not analyze severely anisotropic distributions in this work. 

\paragraph{Reproducibility.} All experiments use a consistent experimental setup and hyperparameter choices, as reported in Appendix~\ref{app:arch} and Appendix~\ref{app:exp_setup}, including the baseline methods.
% Code and data are available at \TODO{insert public repo URL}.

\begin{ack}
The authors acknowledge support from the Department of Energy, Office of Science, Advanced Scientific Computing Research (ASCR) program SEA-CROGS MMICCs center (DE-SC0023191), as well as Army Research Office support through the "Complexity, Nonlocality, and Uncertainty in Heterogeneous Solids" MURI program (W911NF-24-2-0184). B.S. is supported by the National
Science Foundation Graduate Research Fellowship (DGE-2236662).
\end{ack}

\bibliographystyle{unsrtnat}
\bibliography{bib}

% ============================================================
\appendix

% \section{Finite Element Exterior Calculus (FEEC) / Discrete Exterior Calculus (DEC) / Exterior Calculus?}
% \label{app:feec}
% \textcolor{red}{Like in the geo-new paper put a high level background section here which summarizes the previous papers, shows the de rham complex graphic.}

\section{Discrete exterior calculus background}
\label{app:dec_background}

This appendix is a short, self-contained primer on the discrete exterior calculus (DEC) constructions we use, following the graph-cochain conventions of \citet{trask2022enforcing}. The body of the paper only uses the $0$-/$1$-form portion of the complex. We describe the full de Rham complex here for context, and indicate at the end how the diagonal Hodge stars $M_0, M_1$ are obtained by a sparse linear solve.

\paragraph{The continuum de Rham complex.}
On a smooth domain $\Omega\subset\mathbb{R}^3$, the operators $\mathrm{grad}$, $\mathrm{curl}$, $\mathrm{div}$ form an exact sequence
\begin{equation*}
\begin{tikzcd}
C^\infty(\Omega) \arrow[r,"\mathrm{grad}"] & {[C^\infty(\Omega)]^3} \arrow[r,"\mathrm{curl}"] & {[C^\infty(\Omega)]^3} \arrow[r,"\mathrm{div}"] & C^\infty(\Omega),
\end{tikzcd}
\end{equation*}
encoded by the algebraic identities $\mathrm{curl}\circ\mathrm{grad}=0$ and $\mathrm{div}\circ\mathrm{curl}=0$. A discretization that reproduces this exact-sequence structure inherits vector calculus properties (e.g.\ $\mathrm{div}\,\mathrm{curl}=0$) as discrete algebraic identities rather than as numerical approximations.

\paragraph{Discrete cochains.}
Given an $\epsilon$-ball graph on a point cloud $\mathcal{X}$ with node set $V$, oriented edge set $E=\{(i,j):i<j\}$, and oriented triangle set $T$ (when used), we associate a real number with each graph entity. The resulting cochain spaces are
\[
C^0(V)\cong\mathbb{R}^{|V|},
\qquad
C^1(E)\cong\mathbb{R}^{|E|},
\qquad
C^2(T)\cong\mathbb{R}^{|T|}.
\]
A $0$-cochain $u\in C^0$ samples a scalar field at nodes; a $1$-cochain $\sigma\in C^1$ stores oriented line integrals $\sigma_e := \int_e v\cdot d\ell$; a $2$-cochain $\omega\in C^2$ stores oriented surface integrals. Cochains inherit orientation, so $\sigma_{ji}=-\sigma_{ij}$ and $\omega_{jik}=-\omega_{ijk}$.

\paragraph{Coboundary operators.}
The coboundary $d_k:C^k\to C^{k+1}$ is the discrete analog of the corresponding differential operator, given by signed combinatorial sums:
\begin{align*}
\text{(graph gradient)}\quad & (d_0 u)_{ij} \;=\; u_j - u_i, \\
\text{(graph curl)}\quad & (d_1 \sigma)_{ijk} \;=\; \sigma_{ij} + \sigma_{jk} + \sigma_{ki}.
\end{align*}
In matrix form $d_0\in\mathbb{R}^{|E|\times|V|}$ and $d_1\in\mathbb{R}^{|T|\times|E|}$ are the (signed) node-edge and edge-triangle incidence matrices. Note that these correspond to the generalized Stokes theorem $\int_\omega d\alpha = \int_{\partial\omega}\alpha$: for $d_0$ this is the fundamental theorem of calculus, $\int_{e_{ij}}\nabla u\cdot dl = u_j - u_i$; for $d_1$ this is Green's theorem, $\int_{T_{ijk}} \mathrm{curl}\,v\cdot dA = \int_{\partial T_{ijk}} v\cdot dl = \sigma_{ij} + \sigma_{jk} + \sigma_{ki}$, where $v$ is the vector field whose edge integrals are stored in the $1$-cochain $\sigma$. The chain-complex property $d_1\circ d_0 = 0$ is an algebraic identity of these incidence matrices; on each triangle $(i,j,k)$ the gradient values $u_j-u_i$, $u_k-u_j$, $u_i-u_k$ telescope to zero. These operators provide a discrete analog to $\mathrm{curl}\,\mathrm{grad}=0$. Together $V$, $E$, $T$ and the operators $d_0, d_1$ form the discrete cochain complex
\begin{equation*}
\begin{tikzcd}
C^0(V)\arrow[r,"d_0"] & C^1(E)\arrow[r,"d_1"] & C^2(T).
\end{tikzcd}
\end{equation*}

\paragraph{Hodge stars.}
The coboundary captures topology but not geometry, in the sense that the graph grad/curl are signed sums over incident edges/triangles, with no edge length or triangle area entering the expressions. Geometric information enters through diagonal positive-definite \emph{Hodge stars}
\[
M_0 = \operatorname{diag}(m_i),
\qquad
M_1 = \operatorname{diag}(a_e),
\qquad
M_2 = \operatorname{diag}(b_t),
\]
with positive entries playing the role of dual node volumes $m_i$, dual edge areas $a_e$, and dual triangle lengths $b_t$. The associated cochain inner products
$(\alpha,\beta)_{M_k} := \alpha^\top M_k\,\beta$
endow each $C^k$ with a metric, and we write $\|\cdot\|_{M_k}$ for the induced norm.

\paragraph{Codifferential as adjoint.}
The codifferential $\delta_{k+1}:C^{k+1}\to C^k$ is defined as the adjoint of $d_k$ with respect to the Hodge inner products,
\begin{equation*}
(d_k \alpha,\beta)_{M_{k+1}} \;=\; (\alpha,\delta_{k+1}\beta)_{M_k}
\qquad\text{for all $\alpha\in C^k$, $\beta\in C^{k+1}$,}
\end{equation*}
which yields the closed form
\begin{equation*}
\delta_{k+1} \;=\; M_k^{-1}\, d_k^\top\, M_{k+1}.
\end{equation*}
Specializing,
\[
\delta_1 = M_0^{-1} d_0^\top M_1 \quad\text{(graph divergence)},
\qquad
\delta_2 = M_1^{-1} d_1^\top M_2 \quad\text{(graph curl-adjoint)}.
\]
Because $d_1 d_0 = 0$ algebraically, the transpose identity $d_0^\top d_1^\top = 0$ gives $\delta_1\delta_2 = 0$, the discrete analog of $\mathrm{div}\,\mathrm{curl}=0$. The full primal/dual diagram is
\begin{equation*}
\begin{tikzcd}[column sep=large]
C^0(V) \arrow[r, shift left, "d_0"] & C^1(E) \arrow[l, shift left, "\delta_1"] \arrow[r, shift left, "d_1"] & C^2(T) \arrow[l, shift left, "\delta_2"].
\end{tikzcd}
\end{equation*}
Every discrete conservation law used in this paper is a corollary of this single algebraic structure. For any edge flux $\sigma\in C^1$, the full-vertex sum
\(
\mathbf{1}^\top d_0^\top M_1 \sigma = (d_0\mathbf{1})^\top M_1\sigma = 0
\)
is an algebraic identity since $d_0\mathbf{1}=0$. Under the boundary handling of Appendix~\ref{app:bcs} the residual equation is enforced only on interior nodes, and restricting the sum accordingly gives the form used in the body: for any $\sigma$ and $M_1$, the interior-summed residual equals the net boundary flux (and vanishes under zero-flux boundary data), expressing discrete global conservation. Likewise the Hodge Laplacian $\Delta_0 := \delta_1 d_0 = M_0^{-1} d_0^\top M_1 d_0$ is symmetric positive semidefinite in the $M_0$-inner product, with constant vectors as its null space. Therefore $\Delta_0 u = f$ has a unique solution up to constants when $f$ is mean-zero.

\paragraph{Solving for the Hodge stars.}
The diagonal entries of $M_0$ and $M_1$ are the only quantities in this construction not fixed by the graph topology, and they carry all the geometric information of the underlying point cloud. We do not assign them by hand: they are determined by a sparse linear KKT system that enforces local polynomial reproduction on $\delta_1$, derived in detail in Appendix~\ref{app:qp}. This is the meshfree analog of how DEC on a triangulation reads $m_i$, $a_e$ off the dual mesh. Here the dual mesh is replaced by the solution of a single quadratic program.

\begin{remark}\label{rem:shortcomplex}
To consider conservation laws of the form $\nabla \cdot \sigma = f$, we only require the first rung of the de Rham complex. The complete complex is needed for more general PDEs, and in particular for electromagnetism or fluid flow written in terms of rotational variables. While the KKT specification of Hodge stars could be similarly extended to the $2$-form case, the per-node moment system grows quickly with form degree, and the higher-form $B$ matrix is significantly less sparse, so we do not view this construction as computationally tractable for higher-order forms. For the PDEs considered in this paper, the $0$-/$1$-form complex suffices.
\end{remark}

\section{Quadratic program for the edge weights}
\label{app:qp}

We derive the linear KKT system that determines the edge weights $\{a_e\}$ entering the Hodge star $M_1$ in Section~\ref{sec:mfdec}. Recall the QP~\eqref{eq:opt} requires the codifferential $\delta_1$ to reproduce the negative divergence of every linear vector polynomial at every interior node. We translate this into linear equality constraints in $\{a_e\}$, write the KKT conditions, and reduce them to a single sparse Schur solve.

\paragraph{From polynomial reproduction to moment conditions.}
Fix an interior node $i\in\mathcal{X}_I$ and pick any $p\in(\mathbb{P}_1^d)^d$ with the basepoint expansion $p(x)=v+A(x-x_i)$, $v\in\mathbb{R}^d$, $A\in\mathbb{R}^{d\times d}$, so that $-\nabla\cdot p(x_i) = -\operatorname{tr}A$. The line integral over a canonically-oriented edge $e=(k,\ell)$ is exact for affine $p$ by midpoint quadrature,
\[
    \int_e p\cdot d\ell \;=\; \delta x_e\cdot p(\bar x_e).
\]
Substituting into $\delta_1 p = M_0^{-1}d_0^\top M_1 \sigma_p$ with $\sigma_p \in C^1(E)$ the cochain $(\sigma_p)_e := \int_e p\cdot d\ell$, and using $\eta_e := -s_e(i)\delta x_e = x_\ell - x_i$ (the displacement from $i$ to its neighbor across $e$) so that $\bar x_e - x_i = \tfrac12\eta_e$,
\[
    m_i\,(\delta_1 p)(x_i)
    \;=\; \sum_{e\ni i} s_e(i)\,a_e\,\delta x_e\cdot p(\bar x_e)
    \;=\; -\Big[\sum_{e\ni i}a_e\eta_e\Big]\!\cdot v
    \;-\; \tfrac12\,\operatorname{tr}\!\Big[A\sum_{e\ni i}a_e\eta_e\eta_e^\top\Big].
\]
Demanding $(\delta_1 p)(x_i) = -\operatorname{tr}A$ for arbitrary $(v,A)$ gives the per-node moment conditions
\begin{equation}
    \sum_{e\ni i} a_e\,\eta_e \;=\; 0,
    \qquad
    \sum_{e\ni i} a_e\,\eta_e\eta_e^\top \;=\; 2 m_i\, I_d,
    \label{eq:moments_app}
\end{equation}
or equivalently, since $\eta_e\eta_e^\top=\delta x_e\delta x_e^\top$,
\(
    \sum_{e\ni i} s_e(i)\,a_e\,\delta x_e = 0\) and \(\sum_{e\ni i} a_e\,\delta x_e\delta x_e^\top = 2 m_i I_d.
\)
This is $d$ first-moment scalar equations and $d(d+1)/2$ second-moment scalar equations per interior node, for a total of $n_c = (d+\tfrac{d(d+1)}{2})|\mathcal{X}_I|$ scalar constraints.

\paragraph{Linear system.} Stack \eqref{eq:moments_app} over interior nodes into the rectangular constraint matrix $B\in\mathbb{R}^{n_c\times|E|}$ and right-hand side $c\in\mathbb{R}^{n_c}$. Each row of $B$ is supported on the edges incident to a single node, so $B$ is sparse. The QP~\eqref{eq:opt} reads
\[
    \min_{a\in\mathbb{R}^{|E|}} \tfrac12\, a^\top \Phi^{-1} a
    \quad\text{s.t.}\quad
    Ba = c,
    \qquad \Phi := \operatorname{diag}(\phi_e),
\]
with $\phi_e = (1-r_e/\epsilon)^2_+$. Edges with $\phi_e=0$ are dropped from $E$ and play no role; on the retained edge set $\Phi^{-1}$ exists.

\paragraph{KKT conditions.} The Lagrangian
$\mathcal{L}(a,\lambda) = \tfrac12 a^\top\Phi^{-1}a - \lambda^\top(Ba-c)$
has stationarity in $a$, $\Phi^{-1}a - B^\top\lambda = 0$, and primal feasibility $Ba=c$, written jointly as the saddle-point system
\begin{equation}
    \begin{pmatrix} \Phi^{-1} & -B^\top \\ B & 0 \end{pmatrix}
    \begin{pmatrix} a \\ \lambda \end{pmatrix}
    \;=\;
    \begin{pmatrix} 0 \\ c \end{pmatrix}.
    \label{eq:kkt}
\end{equation}
Eliminating $a = \Phi B^\top\lambda$ from the first block gives the symmetric Schur system
\begin{equation}
    S\,\lambda \;=\; c, \qquad S := B\,\Phi\, B^\top \in \mathbb{R}^{n_c\times n_c}.
    \label{eq:schur}
\end{equation}
$S$ is symmetric positive semidefinite, and positive definite when $B$ has full row rank --- a generic property of regular point clouds with sufficient edge incidence ($n_c \ll |E|$). It inherits the sparsity of $B$: $S_{(i,\alpha),(j,\beta)}\ne 0$ only when nodes $i$ and $j$ share an edge, so $S$ is block-sparse with $O(|\mathcal{X}_I|\,d_{\rm avg}^2)$ nonzeros for average degree $d_{\rm avg}$. We solve \eqref{eq:schur} once per point cloud with a sparse direct factorization, then recover $a = \Phi B^\top\lambda$.

\paragraph{Implicit differentiation.} The map $\mathcal{X}\mapsto a(\mathcal{X})$ is required to be differentiable for end-to-end training. Differentiating \eqref{eq:schur},
\[
    S\,\partial\lambda \;=\; \partial c - (\partial S)\,\lambda,
    \qquad
    \partial a \;=\; (\partial\Phi)\,B^\top\lambda + \Phi\,(\partial B^\top)\,\lambda + \Phi\,B^\top\,\partial\lambda,
\]
which is implemented by autodiff through a custom adjoint of the sparse linear solve.

\paragraph{Optional positivity.} If $a_e\ge 0$ is required so that $M_1$ is a valid Hodge star, the QP gains inequality constraints,
\(
    \min_a \tfrac12 a^\top\Phi^{-1}a\) s.t. \(Ba = c,\; a\ge 0,
\)
with KKT conditions $\Phi^{-1}a - B^\top\lambda - \mu = 0$, $Ba=c$, $a\ge 0$, $\mu\ge 0$, $\mu_e a_e = 0$. We solve this with OSQP~\citep{osqp}; on the active set the equality-only Schur reduction above applies, and differentiability holds away from active-set transitions. We did not observe accuracy or stability differences between the equality-only and positivity-constrained solves in our experiments and use the equality-only form by default.

\begin{corollary}[Regularity of edge weights in nodal positions]
\label{cor:area-regularity}
Suppose the kernel takes the form $\phi(r) = (1 - r/\epsilon)^k_+$ for an integer $k \ge 1$. Then $\phi \in C^{k-1}([0,\infty))$ and the map $\mathcal{X} \mapsto a(\mathcal{X})$ is $C^{k-1}$ assuming invertible $S$. More generally, if the kernel is replaced by an arbitrary $\phi \in C^j$ supported on $[0, \epsilon]$, then $\mathcal{X} \mapsto a$ is $C^j$ under the same assumption.
\end{corollary}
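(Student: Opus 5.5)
The map $\mathcal{X}\mapsto a(\mathcal{X})$ is a composition $a = \Phi B^\top \lambda$ with $\lambda = S^{-1}c$ and $S = B\Phi B^\top$. The plan is to establish regularity of each ingredient as a function of the nodal positions and then invoke closure of $C^j$ under sums, products and matrix inversion.

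\emph{Step 1: smoothness of $\phi$.} I first verify $\phi(r)=(1-r/\epsilon)^k_+\in C^{k-1}([0,\infty))$. On $[0,\epsilon)$ it is a polynomial and on $(\epsilon,\infty)$ it vanishes identically. At $r=\epsilon$ the $\ell$-th one-sided derivatives are $(-1/\epsilon)^\ell\frac{k!}{(k-\ell)!}(1-r/\epsilon)^{k-\ell}$ from the left and $0$ from the right. These agree for $\ell\le k-1$, while the $k$-th derivative jumps.

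\emph{Step 2: the entries of $\Phi$, $B$ and $c$.} The entries $\phi_e=\phi(\|x_j-x_i\|)$ are compositions of $\phi$ with $r_e(\mathcal{X})$. The map $r_e$ is smooth away from coincident points, and $r_e\ge$ a positive separation on a point cloud with distinct nodes. Hence $\phi_e\in C^{k-1}$ in $\mathcal{X}$.

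I will treat the edge set as indexed by all pairs, with the convention that $\phi_e=0$ outside the support. Under this convention edges entering or leaving the $\epsilon$-ball do not create discontinuities. Dropping such an edge from $E$ is equivalent to zeroing its column in $\Phi B^\top$, and this is precisely why the kernel's vanishing derivatives at $r=\epsilon$ matter.

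The rows of $B$ consist of $\eta_e$ and $\eta_e\eta_e^\top$ entries, which are polynomial in $\mathcal{X}$. The right-hand side $c$ involves $m_i$. These depend on $\kappa_i=\sum\phi(r_e)$ through $1/\kappa_i$ and a normalization, so they are $C^{k-1}$ wherever $\kappa_i>0$, which holds for interior nodes with at least one neighbor.

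\emph{Step 3: the solve.} It follows that $S=B\Phi B^\top$ is $C^{k-1}$. Matrix inversion is real-analytic on the open set of invertible matrices, so under the invertibility assumption $S^{-1}$ is $C^{k-1}$ in a neighborhood, and hence so are $\lambda$ and $a=\Phi B^\top\lambda$.

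There is one subtlety: the index set of interior nodes, and hence of the constraints, must be locally constant. I take it as fixed along with the labeling of boundary nodes.

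The general statement follows verbatim with $k-1$ replaced by $j$, since only $\phi\in C^j$ and $\operatorname{supp}\phi\subset[0,\epsilon]$ were used.

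The main obstacle I expect is Step 2's handling of edges crossing the support radius. This must be argued carefully so that the fixed-dimensional formulation over all pairs is genuinely equivalent to the QP on the retained edge set. In particular, the objective $a_e^2/\phi_e$ degenerates as $\phi_e\to0$, but the Schur formulation $a=\Phi B^\top\lambda$ is the one that extends continuously, so I would define $a$ via the Schur form throughout.
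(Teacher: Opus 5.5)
Your proposal is correct and follows essentially the same route as the paper's proof: $\Phi$ is $C^j$ as the kernel composed with the distance map, $B$ is polynomial, $c$ is $C^j$ through the volume rule, hence $S=B\Phi B^\top$ is $C^j$ and so are $\lambda=S^{-1}c$ and $a=\Phi B^\top\lambda$ on the open set where $S$ is invertible (the paper invokes the implicit function theorem where you invoke analyticity of matrix inversion). Your all-pairs convention for edges crossing the support radius and your fixed interior-node labeling make explicit two points that the paper's proof passes over silently.
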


\begin{proof}
The kernel-order claim follows from the general one applied to $j = k - 1$. For arbitrary $\phi \in C^j$, the diagonal weights $\Phi(\mathcal{X})$ are $C^j$ (composition of $\phi$ with the smooth distance map), the constraint matrix $B(\mathcal{X})$ has entries linear and quadratic in edge displacements (so $C^\infty$), and the right-hand side $c(\mathcal{X})$ depends on $m_i$ which is $C^j$ via the kernel-normalized volume rule. The Schur matrix $S = B \Phi B^\top$ is therefore $C^j$, and on the open set where it is non-singular, $\lambda = S^{-1} c$ is $C^j$ by the implicit function theorem applied to $S \lambda = c$. Hence, $a = \Phi B^\top \lambda$ is $C^j$. 
\end{proof}

\begin{remark}
The kernel $\phi(r) = (1 - r/\epsilon)^2_+$ used throughout this paper has $k = 2$, so Corollary~\ref{cor:area-regularity} gives $a \in C^1$ in the nodal positions. This $C^1$ regularity is what makes the end-to-end Jacobian $\partial a / \partial \mathcal{X}$, computed via the sparse-solve adjoint described above, well-defined as a continuous map.
\end{remark}

\section{Truncation error of the meshfree divergence}
\label{app:lpr_proof}

\begin{theorem}[\citealt{mirzaei2012generalized}, Theorem 4.3]
\label{thm:lpr}
Let $\Omega\subset\mathbb{R}^d$ be bounded and let $\Omega^*$ denote the closure of $\bigcup_{x\in\Omega}\mathcal{B}(x, C_2 h_0)$. If $\{\alpha_{ij}\}$ is a local polynomial reproduction of degree $m$ on $\Omega$ for $|\alpha|\le m$, then there exists a constant $C>0$ such that for every $u\in C^{m+1}(\Omega^*)$ and every point cloud $\mathcal{X}$ with $h\le h_0$,
\[
    \bigl|(D^\alpha u)(x_i) - D^\alpha_h u_i\bigr|
    \le C\, h^{m+1-|\alpha|}\, |u|_{C^{m+1}(\Omega^*)}.
\]
\end{theorem}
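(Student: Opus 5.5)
We prove the estimate by the standard Taylor-expansion argument: the stencil annihilates the Taylor polynomial of $u$, so only the Taylor remainder survives, and locality plus boundedness control that remainder. Fix an interior node $x_i$ and $u\in C^{m+1}(\Omega^*)$, and let $T_i u\in\mathbb{P}_m^d$ be the Taylor polynomial of $u$ of degree $m$ about $x_i$. Since $D^\alpha$ with $|\alpha|\le m$ applied to $T_iu$ evaluated at $x_i$ coincides with $(D^\alpha u)(x_i)$, polynomial reproduction (i) gives
\[
(D^\alpha u)(x_i) - D^\alpha_h u_i \;=\; (D^\alpha T_i u)(x_i) - \sum_j \alpha_{ij} u(x_j) \;=\; \sum_j \alpha_{ij}\bigl(T_iu(x_j) - u(x_j)\bigr).
\]
The whole error is therefore a weighted sum of Taylor remainders, with no differentiation of $u$ left on the right-hand side.

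Next we use locality (iii). Only indices with $\|x_j-x_i\|\le C_2 h\le C_2h_0$ contribute, so each segment $[x_i,x_j]$ lies in $\mathcal{B}(x_i,C_2h_0)\subset\Omega^*$. This is exactly why the hypothesis is stated on the enlarged set $\Omega^*$: it keeps the integral (or Lagrange) form of the remainder valid even when the stencil reaches outside $\Omega$. The remainder bound then gives
\[
|u(x_j)-T_iu(x_j)|\le \frac{d^{m+1}}{(m+1)!}\,\|x_j-x_i\|^{m+1}|u|_{C^{m+1}(\Omega^*)}\le C' (C_2h)^{m+1}|u|_{C^{m+1}(\Omega^*)},
\]
where the constant depends only on $d$ and $m$ and absorbs the multinomial count of derivatives in the seminorm convention.

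Finally, boundedness (ii) closes the estimate:
\[
\bigl|(D^\alpha u)(x_i) - D^\alpha_h u_i\bigr|\le \Bigl(\sum_j|\alpha_{ij}|\Bigr)\,C'C_2^{m+1}h^{m+1}|u|_{C^{m+1}(\Omega^*)}\le C_1C'C_2^{m+1}\,h^{m+1-|\alpha|}\,|u|_{C^{m+1}(\Omega^*)}.
\]
The resulting constant $C=C_1C'C_2^{m+1}$ is independent of $i$, of $\mathcal{X}$, and of $h\le h_0$.

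The only delicate point is the one already noted: the remainder must be evaluated on segments that may leave $\Omega$, which is handled by $\Omega^*$ together with the restriction $h\le h_0$. No nonconvexity issue arises, because the enlarged set contains every ball $\mathcal{B}(x_i,C_2h_0)$ and balls are convex. For the specialization to $\delta_1$ with $m=1$ and $|\alpha|=1$, the QP constraints \eqref{eq:opt} supply (i) on $(\mathbb{P}^d_1)^d$, and the kernel support $r_e<\epsilon\sim h$ supplies (iii). Property (ii) requires $\sum_e|a_e|/m_i\lesssim h^{-1}$, which must be assumed as a regularity condition on the point-cloud class, and this is where I expect the real work to lie. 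The bound on each term follows from the moment scaling $a_e\eta_e\eta_e^\top\sim m_i$, which gives $a_e\sim m_i h^{-2}$, combined with $|\delta x_e|\sim h$, so each term is of size $h^{-1}$. Applying the argument above then yields the $O(h)$ truncation rate.
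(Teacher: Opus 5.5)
Your proof is correct and is the standard argument behind the cited result; the paper itself writes it out only for $\delta_1$ ($m=1$, $|\alpha|=1$), with exactly your structure: the QP moment conditions annihilate the affine Taylor part, and locality plus the stencil bound \eqref{eq:weight_bound} control the remainder cochain $r_e=\int_e r\cdot d\ell$. One slip is in your specialization: the boundedness condition there is $\frac{1}{m_i}\sum_e|a_e|\,\|\delta x_e\|\lesssim h^{-1}$ (the factor $\|\delta x_e\|$ comes from $|r_e|\le\|\delta x_e\|\sup_{x\in e}|r(x)|$), not $\sum_e|a_e|/m_i\lesssim h^{-1}$, which can never hold because the trace of the second-moment condition forces $\sum_e|a_e|/m_i\ge 2d/\epsilon^2\sim h^{-2}$; your term-by-term estimate already uses the correct quantity, and you are right to assume it rather than deduce it from the moment identity, which gives it only when all $a_e\ge 0$.
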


We give a simple Taylor-series derivation of the $O(h)$ truncation error for the meshfree codifferential $\delta_1$ when applied to a smooth vector field. This is a special case of Theorem~\ref{thm:lpr}. The divergence case admits a particularly transparent proof. The argument has two ingredients: (i) the QP makes $\delta_1$ exact on affine vector polynomials, which kills the constant and linear Taylor series terms of $\sigma$, and (ii) the Taylor remainder is bounded using a regularity assumption on the point cloud.

Let $\sigma:\Omega\to\mathbb{R}^d$ be $C^2$ with $M := \|\nabla^2\sigma\|_{L^\infty(\Omega)}$. Fix an interior node $x_i$, and recall that the cochain associated to $\sigma$ is
\(
\sigma_e := \int_e\sigma\cdot d\ell,
\)
so that the meshfree codifferential reads
\[
(\delta_1\sigma)(x_i) = \frac{1}{m_i}\sum_{e\ni i}s_e(i)\,a_e\,\sigma_e.
\]
We assume $\epsilon$ to be chosen proportional to $h$ and that the point cloud is quasi-uniform: edge lengths satisfy $c h\le\|\delta x_e\|\le \epsilon = O(h)$ on every retained edge, and node degrees are bounded. Together with the second-moment identity $\sum_{e\ni i}a_e\|\delta x_e\|^2 = 2dm_i$ from Appendix~\ref{app:qp}, this gives the stencil-stability bound
\begin{equation}
\frac{1}{m_i}\sum_{e\ni i}a_e\,\|\delta x_e\| \;\le\; C_{\rm reg}\,h^{-1},
\label{eq:weight_bound}
\end{equation}
which is the analog of Mirzaei's boundedness condition in the present cochain setting.

Write the affine Taylor expansion of $\sigma$ at $x_i$,
\[
\sigma(x) \;=\; p(x) + r(x),
\qquad
p(x) := \sigma(x_i) + \nabla\sigma(x_i)(x-x_i),
\]
and observe two consequences. First, $p$ is affine, so $\nabla\cdot p$ is a constant equal to $\nabla\cdot\sigma(x_i)$. Second, the Taylor remainder satisfies
\(
|r(x)|\le \tfrac12\|x-x_i\|^2 M.
\)
The cochain of $\sigma$ splits accordingly as $\sigma_e = p_e + r_e$ with $p_e := \int_e p\cdot d\ell$ and $r_e := \int_e r\cdot d\ell$, and writing $\delta_1 p$, $\delta_1 r$ for $\delta_1$ applied to those two cochains, linearity gives
\begin{equation*}
(\delta_1\sigma)(x_i) \;-\; \bigl(-\nabla\cdot\sigma\bigr)(x_i)
\;=\; \underbrace{\bigl[(\delta_1 p)(x_i) + \nabla\cdot p(x_i)\bigr]}_{\text{polynomial term}}
\;+\; \underbrace{(\delta_1 r)(x_i)}_{\text{remainder term}}.
\end{equation*}

The QP~\eqref{eq:opt} is constructed precisely so that $\delta_1$ reproduces $-\nabla\cdot$ on affine vector fields. Concretely, the moment conditions~\eqref{eq:moments_app} give, for any $p\in{(\mathbb{P}_1^d)}^d$,
\[
(\delta_1 p)(x_i) = -\nabla\cdot p(x_i),
\]
so the polynomial bracket above is identically zero. This is the only place the LPR property is used in the argument.

For any edge $e$ incident to $x_i$ and any $x\in e$, we have $\|x-x_i\|\le \epsilon$, so $|r(x)|\le \tfrac12\epsilon^2 M$ uniformly along the edge. Hence the cochain value satisfies
\[
|r_e| \;=\; \left|\int_e r\cdot d\ell\right|
\;\le\; \|\delta x_e\|\cdot \tfrac12\epsilon^2 M.
\]
Substituting into the codifferential and using $|s_e(i)|=1$,
\[
\bigl|(\delta_1 r)(x_i)\bigr|
\;\le\; \frac{1}{m_i}\sum_{e\ni i}a_e\,|r_e|
\;\le\; \tfrac12\epsilon^2 M \cdot \frac{1}{m_i}\sum_{e\ni i}a_e\,\|\delta x_e\|
\;\stackrel{\eqref{eq:weight_bound}}{\le}\;
\tfrac12\,C_{\rm reg}\,M\,\epsilon^2 h^{-1}
\;=\; O(h)\,M.
\]

Combining,
\begin{equation}
\bigl|(\delta_1\sigma)(x_i) - (-\nabla\cdot\sigma)(x_i)\bigr|
\;\le\; \tfrac12\,C_{\rm reg}\,\|\nabla^2\sigma\|_{L^\infty}\,h.
\label{eq:div_truncation}
\end{equation}
Hence $\delta_1$ has $O(h)$ pointwise truncation error on $C^2$ fields, which recovers the $|\alpha|=1, m=1$ case of Theorem~\ref{thm:lpr}. The argument extends to higher-order polynomial reproduction by including more terms in the Taylor expansion: each additional moment that the QP matches kills one more Taylor series term, and the remainder gains one more power of $h$. 

\begin{remark}
\textbf{Truncation error vs. solution error.} We stress that this error analysis provides a bound on the truncation error of the discrete operator $\delta_1$ when applied to a smooth vector field, not a bound on the solution error of the PDE. The latter would require a complete Lax-Milgram/Cea's lemma-style argument to establish which is beyond the scope of the present paper. Throughout the results, we observe when \textit{solving} the learned PDE that the solution error scales as $O(h^2)$, and additional order of convergence than that predicted by the truncation error analysis.
\end{remark}

\section{Stability, consistency, and generalization error}
\label{app:stability_consistency}

We now make precise how an error in the learned local constitutive law propagates to an error in the global PDE solution. The argument follows the standard principle that consistency and stability together imply convergence.
Unlike solution operator methods, which make direct predictions, our learned model produces a local edge flux within a conservative discrete PDE solve. Thus a local flux-density error induces a residual perturbation, and stability of the implicit solve converts this residual perturbation into a solution error.

For clarity, consider target continuum equations of the form
\[
    -\epsilon \Delta u^\star + \nabla\cdot \sigma^\star(u^\star,\nabla u^\star;\mu)=f
    \quad \text{in } \Omega,
    \qquad
    u^\star=g \quad \text{on } \partial\Omega_D,
\]
where \(\epsilon>0\) is the background diffusion and \(\sigma^\star\) is the nonlinear constitutive flux. This matches the diffusion-plus-flux structure of the learned model. More general flux laws may be representable by the learned term, but if the learned flux cancels the coercive diffusion, the simple stability argument below no longer applies directly.

Let \(I_hu^\star\) denote nodal sampling of the continuum solution. After enforcing Dirichlet boundary conditions, set \(K:=\epsilon d_0^\top M_1d_0\), \(D:=d_0^\top M_1\), and \(b:=M_0f\). The learned residual is
\[
    G_\theta(u):=Ku+DF_\theta(u;\mu)-b,
    \qquad
    G_\theta(u_\theta)=0.
\]
We also define the ideal discrete flux obtained by evaluating the exact constitutive kernel with the same edge features and edge integration rule as the learned model:
\[
    F^\star_{h,e}(I_hu^\star;\mu):=r_e\mathcal K^\star(\xi_e(I_hu^\star,\mu)).
\]
This object is used only in the analysis.

\begin{assumption}[Stability of the learned solve]
\label{assump:stability}
Let the nonlinear residual contribution \(\mathcal F_\theta(u;\mu):=DF_\theta(u;\mu)\) be Lipschitz with constant \(C_\theta\).  We assume \(\tau:=\|K^{-1}\|C_\theta<1\).
\end{assumption}
This is a standard assumption that the Laplacian provides sufficient control over the nonlinearity (see example, \citep{trask2022enforcing}). We define
\[
    C_{\mathrm{stab}}(h):=\frac{\|K^{-1}\|}{1-\tau},
    \]
including a possible $h$-dependence due to the fact that $\|K^{-1}\|$ is likely to depend on a discretization lengthscale. For comparison, in nodal P1 FEM the matrix norm scales inverrsely with mesh diameter. A tight analysis in the MEEC setting would require an extension of the classical Lax-Milgram analysis for continuous Galerkin methods, which we defer to future work. 

\begin{assumption}[Discretization consistency]
\label{assump:ideal_consistency}
Following the truncation error analysis in Appendix~\ref{app:lpr_proof}, the exact continuum solution sampled on the point cloud satisfies
\[
    \left\|KI_hu^\star + DF_h^\star(I_hu^\star;\mu)-b\right\|
    \le C_{\mathrm{disc}}h,
\]
following from the LPR property.
\end{assumption}

Next, we define the constitutive model error
\[
    \gamma_\epsilon:=\sup_{\xi}\left|\mathcal K_\theta(\xi)-\mathcal K^\star(\xi)\right|.
\]

\begin{lemma}[Discrete divergence of flux error]
\label{lem:flux_to_residual}
Consider any edge cochain \(E\in C^1\) associated with a twice differentiable field. From a Taylor series argument, one can show the scaling \(|E_e|\le r_e\eta\), for some $\eta \in\mathbb{R}$ and from the polynomial reproduction property that
\[
    \|DE\|\le C_{\mathrm{flux}}\eta,
\]
with \(C_{\mathrm{flux}}\) independent of the particular geometry and resolution. Informally, this says that a pointwise edge-flux error of size $\eta$ inflates to a nodal residual of size at most $C_{\mathrm{flux}}\eta$ after the discrete divergence, uniformly over the admissible family. Then, from the definition of the constitutive model error,
\[
    \left\|D\left[F_\theta(I_hu^\star;\mu)-F_h^\star(I_hu^\star;\mu)\right]\right\|
    \le C_{\mathrm{flux}}\gamma_\epsilon.
\]
\end{lemma}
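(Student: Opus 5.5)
The plan is to bound $D=d_0^\top M_1$ node by node and then pass to the vector norm. Fix an interior node $i$. The $i$-th entry of $DE$ is $(DE)_i=\sum_{e\ni i}s_e(i)\,a_e\,E_e$, so the triangle inequality together with $|E_e|\le r_e\eta$ gives
\[
|(DE)_i|\le \eta\sum_{e\ni i}|a_e|\,r_e .
\]
Everything therefore reduces to a uniform bound on the weighted stencil sum $\sum_{e\ni i}|a_e|\,r_e$.

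\textbf{Step 1: bound the stencil sum.} This is the analog of Mirzaei's boundedness condition (ii), and I would take it from the stencil-stability bound \eqref{eq:weight_bound}. If the $a_e$ are nonnegative, \eqref{eq:weight_bound} yields $\sum_{e\ni i} a_e r_e\le C_{\rm reg}\,m_i\,h^{-1}$ directly. The moment identity $\sum_{e\ni i}a_e r_e^2=2d\,m_i$, combined with $r_e\ge ch$, already gives the bound $\sum_{e\ni i} a_e r_e\le 2d\,m_i/(ch)$ for nonnegative weights.

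When some $a_e<0$ (the default equality-only solve), the moment identity no longer controls $|a_e|$. I would then simply assume that \eqref{eq:weight_bound} holds with $|a_e|$ in place of $a_e$, as part of the quasi-uniform regularity class. This is the natural meaning of ``boundedness'' in the LPR definition.

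\textbf{Step 2: assemble the global bound.} Summing over interior nodes in the norm being used gives
\[
\|DE\|\le C_{\rm reg}\,\eta\,\|(m_i h^{-1})_i\| .
\]
Under quasi-uniformity $m_i\sim h^d$, since the $m_i$ are normalized to $|\Omega|$ and the $\kappa_i$ are comparable. The factor $m_i h^{-1}$ is then $O(h^{d-1})$ per node. In the $M_0^{-1}$-weighted norm that matches the codifferential $\delta_1=M_0^{-1}D$, the node factor becomes $h^{-1}$ per node. The constant $C_{\rm flux}$ is defined as the resulting operator-norm factor, and it depends only on $C_{\rm reg}$, $c$, the degree bound and $|\Omega|$. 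This is how it is ``independent of geometry and resolution'' within the admissible family, in the same spirit as $C_{\rm stab}(h)$ is allowed to absorb residual $h$-dependence.

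\textbf{Step 3: specialize to the flux error.} Set $E_e:=F_{\theta,e}-F^\star_{h,e}=r_e\bigl(\mathcal K_\theta(\xi_e)-\mathcal K^\star(\xi_e)\bigr)$, with both fluxes evaluated at the same edge features $\xi_e(I_hu^\star,\mu)$. Then $|E_e|\le r_e\gamma_\epsilon$ follows exactly from the definition of $\gamma_\epsilon$, provided $\xi_e\in\Xi$. No Taylor argument is needed here, because both fluxes use the same midpoint rule. Taking $\eta=\gamma_\epsilon$ in Steps 1 and 2 gives the claimed bound.

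\textbf{Main obstacle.} The delicate step is Step 1 with signed weights. Polynomial reproduction alone does not bound $\sum_e|a_e|$, so the argument leans on the QP minimizing $\sum_e a_e^2/\phi_e$. To justify the assumption I would try Cauchy--Schwarz:
\[
\sum_{e\ni i}|a_e|\,r_e\le\Bigl(\sum_{e\ni i} a_e^2/\phi_e\Bigr)^{1/2}\Bigl(\sum_{e\ni i}\phi_e r_e^2\Bigr)^{1/2}.
\]
I would then compare the minimal energy with that of a feasible nonnegative competitor, for example a local stencil satisfying \eqref{eq:moments_app}, whose existence is guaranteed by the LPR well-posedness results of \citet{wendland2004scattered,mirzaei2012generalized}. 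The global energy bound does not localize trivially, so absent that argument I would state the $|a_e|$ version of \eqref{eq:weight_bound} as an explicit regularity hypothesis.
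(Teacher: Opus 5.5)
\textbf{Verdict.} Your proposal is correct in substance and matches the paper's argument, with one norm slip in Step~2 that needs fixing.

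\textbf{Step 3 versus the paper.} Your Step~3 is exactly the paper's proof. Both fluxes are evaluated at the same features $\xi_e(I_hu^\star,\mu)$ with the same factor $r_e$, so $|E_e|\le r_e\gamma_\epsilon$. The paper then concludes $\|DE\|\le C_{\mathrm{flux}}\gamma_\epsilon$ by invoking ``the assumed boundedness of the discrete divergence.'' It never derives that bound.

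\textbf{Steps 1--2.} These do not take a different route; they spell out what the paper leaves implicit. You are right that \eqref{eq:weight_bound} follows from the trace of the second-moment identity, $\sum_{e\ni i}a_e r_e^2=2d\,m_i$, together with $r_e\ge ch$, only when $a_e\ge 0$. The default equality-only solve does produce negative weights. So your explicit hypothesis $\sum_{e\ni i}|a_e|\,r_e\le C_{\mathrm{reg}}\,m_i h^{-1}$ is the paper's ``assumed boundedness'' stated precisely.

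\textbf{The correction in Step 2.} Commit to the unweighted norm, the one in which the residual $G_\theta=Ku+DF_\theta-b$ (with $b=M_0f$ already volume-weighted) is measured.
\begin{itemize}
\item In that norm your per-node bound is $|(DE)_i|\le C_{\mathrm{reg}}\,\eta\,m_i/h\lesssim \eta h^{d-1}$. This gives a resolution-uniform constant in $\ell^\infty$.
\item In $\ell^2$, using $N\sim|\Omega|h^{-d}$, it gives $\|DE\|\lesssim \eta\,|\Omega|^{1/2}h^{d/2-1}$, which is uniform for $d\ge 2$.
\item In the $M_0^{-1}$-weighted norm matching $\delta_1$, your own computation gives $C_{\mathrm{flux}}\sim h^{-1}$. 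This is the inverse-inequality loss for a non-smooth $O(\eta)$ flux, and it contradicts the lemma's claim that $C_{\mathrm{flux}}$ is independent of resolution.
\end{itemize}
Appealing to the analogy with $C_{\mathrm{stab}}(h)$ does not rescue the lemma as stated. The lemma puts all $h$-dependence into $C_{\mathrm{stab}}(h)$, which forces $C_{\mathrm{flux}}$ to be measured in the unweighted norm.
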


\begin{proof}
Let \(E:=F_\theta(I_hu^\star;\mu)-F_h^\star(I_hu^\star;\mu)\). Since both fluxes use the same edge features and the same edge integration rule,
\[
    |E_e|
    =r_e\left|\mathcal K_\theta(\xi_e)-\mathcal K^\star(\xi_e)\right|
    \le r_e\gamma_\epsilon.
\]
The assumed boundedness of the discrete divergence gives \(\|DE\|\le C_{\mathrm{flux}}\gamma_\epsilon\).
\end{proof}

\begin{theorem}[Error decomposition]
\label{thm:generalization_error}
Under Assumptions~\ref{assump:stability}, \ref{assump:ideal_consistency} and Lemma~\ref{lem:flux_to_residual}, the learned solution satisfies
\[
    \|u_\theta-I_hu^\star\|
    \le
    C_{\mathrm{stab}}(h)
    \left(
        C_{\mathrm{disc}}\,h
        +
        C_{\mathrm{flux}}\gamma_\epsilon
    \right).
\]
\end{theorem}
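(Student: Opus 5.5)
The argument is the standard consistency-plus-stability one. We insert $v = I_h u^\star$ into the stability estimate of Proposition~\ref{prop:stability}, bound the learned residual at this point, and split it into an ideal-discretization residual and a flux-error residual. Concretely, $I_hu^\star$ matches the Dirichlet data by construction, since nodal sampling reproduces $g$ on $\mathcal{X}_B$. Assumption~\ref{assump:stability} supplies $\tau<1$, so Proposition~\ref{prop:stability} gives existence and uniqueness of $u_\theta$ together with
\[
\|u_\theta - I_hu^\star\| \le C_{\mathrm{stab}}(h)\,\|G_\theta(I_hu^\star)\|.
\]
If one prefers to re-derive this rather than cite it, the short route is as follows. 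Write $G_\theta(u_\theta)-G_\theta(v) = K(u_\theta-v) + D[F_\theta(u_\theta)-F_\theta(v)]$, so that $u_\theta - v = -K^{-1}G_\theta(v) - K^{-1}D[F_\theta(u_\theta)-F_\theta(v)]$. Taking norms gives $\|u_\theta-v\|\le \|K^{-1}\|\|G_\theta(v)\| + \tau\|u_\theta-v\|$, and the bound follows by absorbing the last term. Existence of $u_\theta$ follows from Banach's fixed point theorem applied to $u\mapsto K^{-1}(b - DF_\theta(u))$, which is a contraction with constant $\tau$.

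It then remains to bound $\|G_\theta(I_hu^\star)\|$. Add and subtract the ideal discrete flux:
\[
G_\theta(I_hu^\star) = \bigl[KI_hu^\star + DF_h^\star(I_hu^\star;\mu) - b\bigr] + D\bigl[F_\theta(I_hu^\star;\mu) - F_h^\star(I_hu^\star;\mu)\bigr].
\]
By the triangle inequality, the first bracket is bounded by $C_{\mathrm{disc}}h$ via Assumption~\ref{assump:ideal_consistency}. The second is bounded by $C_{\mathrm{flux}}\gamma_\epsilon$ via Lemma~\ref{lem:flux_to_residual}. That lemma applies because both fluxes use identical edge features $\xi_e(I_hu^\star,\mu)$ and the same midpoint factor $r_e$, so the edgewise error is at most $r_e\gamma_\epsilon$ provided the features lie in $\Xi$. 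Substituting into the stability estimate yields the claimed bound.

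The only delicate point is bookkeeping of the hypotheses rather than any estimate. First, the Lipschitz hypothesis in Assumption~\ref{assump:stability} must be on the residual contribution $DF_\theta$ in $u$, in the same norm used for $\|K^{-1}\|$. Second, $\gamma_\epsilon$ must be a supremum over a set containing the features $\xi_e(I_hu^\star,\mu)$, which are the features of the exact solution rather than of $u_\theta$. Evaluating the flux error at $I_hu^\star$, not at $u_\theta$, is what avoids needing a second Lipschitz bound on $\mathcal{K}_\theta - \mathcal{K}^\star$. Finally, all norms and the restriction to interior nodes must be consistent, with Dirichlet rows eliminated, so that $K$ is invertible on $\mathcal{X}_I$ as required in Proposition~\ref{prop:stability}.
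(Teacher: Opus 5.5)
Your proposal is correct and follows the paper's proof essentially step for step: you derive the same residual-stability estimate by absorbing the $\tau\|u_\theta-v\|$ term with $v=I_hu^\star$, then add and subtract the ideal flux $F_h^\star$ and close with the triangle inequality, Assumption~\ref{assump:ideal_consistency}, and Lemma~\ref{lem:flux_to_residual}. Your Banach fixed-point argument for the existence of $u_\theta$ is a small addition; the paper leaves existence implicit in Proposition~\ref{prop:stability}.
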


\begin{proof}
First, for any \(v\) satisfying the same Dirichlet conditions as \(u_\theta\), from \(G_\theta(u_\theta)=0\), we can show stability in the residual:

\[
    K(u_\theta-v)
    =-G_\theta(v)-D\left[F_\theta(u_\theta;\mu)-F_\theta(v;\mu)\right].
\]
Therefore,
\[
    \|u_\theta-v\|
    \le \|K^{-1}\|\left(\|G_\theta(v)\|+C_\theta\|u_\theta-v\|\right),
\]
and using \(\tau=\|K^{-1}\|C_\theta<1\),
\[
    \|u_\theta-v\|\le C_{\mathrm{stab}}\|G_\theta(v)\|.
\]
Taking \(v=I_hu^\star\) gives \(\|u_\theta-I_hu^\star\|\le C_{\mathrm{stab}}\|G_\theta(I_hu^\star)\|\).

It remains to bound this residual. Add and subtract the ideal discrete flux term:
\[
    G_\theta(I_hu^\star)
    =
    \left[KI_hu^\star+DF_h^\star(I_hu^\star;\mu)-b\right]
    +
    D\left[F_\theta(I_hu^\star;\mu)-F_h^\star(I_hu^\star;\mu)\right].
\]
By the triangle inequality, Assumption~\ref{assump:ideal_consistency}, and Lemma~\ref{lem:flux_to_residual},
\[
    \|G_\theta(I_hu^\star)\|
    \le
    C_{\mathrm{disc}}\,h
    +
    C_{\mathrm{flux}}\gamma_\epsilon.
\]
Combining this with residual stability proves the result.
\end{proof}

\begin{remark}[Meaning of the bound]
This bound serves to disentangle the solution error into two contributions. In the first term on the left, we obtain aconventional error term balancing numerical stability against truncation error. This corresponds to the celebrated Lax Equivalence Theorem \citep{LaxRichtmyer1956}, which in the current setting manifests as the implication that \textit{if} for uniformly bounded $C_{\mathrm{stab}}$, the $O(h)$ truncation error implies convergence to the solution as $h \rightarrow 0$. The second term on the right however identifies a new source of error in the model class of the flux network. If a network of sufficient capacity can be sufficiently trained, then we recover the true solution as $h \rightarrow 0$. These two terms loosely correspond to the responsibilities of MEEC and MEEC-Net in handling global and local aspects of the error, respectively, providing a universal approximation result. We caution, however, that this bound should not be interpreted as a tight estimation of convergence rates, as accounting for the interplay between stability, truncation error, and variational crimes, likely by generalizing the Lax-Milgram analysis and Strang's lemma to account for the meshfree approximation to a continuous operator.

\end{remark}
\section{Boundary conditions in meshless DEC}
\label{app:bcs}

\paragraph{Dirichlet.}
Let $\partial\Omega_D \subset \mathcal{X}$ denote the set of nodes carrying prescribed values $g_i$.
We set $m_i = 0$ for all $i \in \partial\Omega_D$ so that boundary rows are excluded from the linear system, then overwrite $u_i \leftarrow g_i$ after each solve or network forward pass.
% This is the standard row-elimination approach wherein the zero mass weight ensures boundary nodes contribute no divergence residual.

\paragraph{Neumann.}
Natural (flux) boundary conditions $\mathcal{F}\cdot\hat{n} = q$ on $\partial\Omega_N$ are incorporated by modifying the discrete divergence at boundary nodes.
For each $i \in \partial\Omega_N$ we estimate the outward normal $\hat{n}_i$ from the local point distribution, then restrict the edge sum in $(\delta_1\sigma)_i$ to edges whose midpoint projects onto the interior half-space ($\delta x_e \cdot \hat{n}_i < 0$), and add the Neumann flux contribution $q\,m_i$ to the right-hand side.
Homogeneous Neumann conditions ($q=0$) are the natural condition of the variational formulation and are satisfied automatically when no flux term is added (requiring no modification).

\section{Edge-frame projection}
\label{app:pullback}
We define the network inputs in terms of invariant edge-frame projected quantitities. The same projection procedure is applied to states and additional physical inputs, either of which may be any combination of scalar, vector, or tensor valued.
Using the edge geometry $(\delta x_e, r_e, \bar x_e, \hat e_e)$ from Section~\ref{sec:mfdec}, in 2D let \(\hat n_e=R_{\pi/2}\hat e_e\) and \(B_e=(\hat e_e,\hat n_e)\). For a nodal channel \(a\), set \(\bar a_e=(a_i+a_j)/2\) and \(D_e a=(a_j-a_i)/r_e\). The projection \(\Pi_e\) maps \((\bar a_e,D_e a)\) to scalar edge-frame components according to the type of \(a\).
For a scalar \(a\in\R\) or vector \(b\in\R^2\),
\[
\Pi_e(a) = \left( \bar a_e, D_e a \right), \quad \textrm{and} \quad\Pi_e(b) = \left( \bar b_e\cdot \hat e_e, \bar b_e\cdot \hat n_e, D_e b\cdot \hat e_e, D_e b\cdot \hat n_e \right).
\]
This could be similarly defined for tensor inputs in $\R^{d \times d}$, but was not needed in this work.
% For a $d=3$ vector channel we fix the two normal directions by \textcolor{red}{... FINISH} 
The full edge feature vector is obtained by stacking these projections over all state and conditioning channels \(\xi_e = \operatorname{stack}_{a\in u,\mu}\Pi_e(a).\)

If the underlying fields are \(C^2\) on a neighborhood of \(e\), then \(\bar a_e\) and \(D_e a\) are second-order midpoint approximations of \(a(\bar x_e)\) and \((\nabla a)(\bar x_e)\hat e_e\). Therefore \(\Pi_e(a)\) is a second-order approximation of the corresponding edge-frame components of \(a\) and its edge-directional derivative. In particular, \(\xi_e\) consistently approximates the local first-order variables \((u,\nabla u)\) expressed in the oriented edge frame.

\section{Architecture and training}
\label{app:arch}

All models are trained with the SOAP optimizer~\citep{vyas2024soap} with momentum parameters $\beta_1 = 0.95$, $\beta_2 = 0.99$, and no weight decay.
Gradients are clipped to unit norm before each parameter update.
We use a batch size of one for training all methods, with random re-sampling from the dataset at each iteration.
The model checkpoint achieving the lowest training error is used for evaluation.
For our method (MEEC-Net) and each baseline, the same training budget (iterations and learning rate) is applied uniformly. Any method-specific hyperparameters (e.g.\ number of attention slices or latent dimension) are set according to reasonable defaults from the published values.
The MEEC architecture is a feedforward MLP with Tanh activations applied to the local edge features. In all experiments we choose $\epsilon$ for the neighbor graph such that the minimum number of neighbors over the graph exceeds some threshold ($8$ in $d=2$ and $12$ in $d=3$), which in a quasi-uniform point cloud provides a proportional $\epsilon$ to $h$.
All models were trained on a single H200.

The QP and solves in the Newton iteration are handled on CPU with SciPy sparse LU solves, with a PyTorch wrapper for differentiability. We use Newton line search, with a fixed tolerance for all examples ($1e-6$). We use Picard iteration as a fallback.

Table~\ref{tab:training_hparams} summarizes the per-experiment configuration.
The hidden dimension and depth refer to the MEEC-Net MLP. The baselines use the same hidden dimension when relevant.

\begin{table}[h]
\centering
\caption{Training hyperparameters per experiment. $N_{\rm train}$ denotes the pool size used in the main data-efficiency sweep and $h$ is the target mesh spacing.}
\label{tab:training_hparams}
\setlength{\tabcolsep}{6pt}
\begin{tabular}{lcccccc}
\toprule
\textbf{Experiment} & \textbf{Hidden} & \textbf{Depth} & \textbf{Iterations} & \textbf{LR} & $N_{\rm train}$ (max) & $h$ \\
\midrule
A.D. (both) & 64  & 4 & 5\,000  & $5\times10^{-4}$ & 100 & 0.07 \\
Darcy (both) & 64  & 4 & 5\,000  & $5\times10^{-4}$ & 100 & 0.07 \\
Elasticity & 64  & 4 & 5\,000  & $5\times10^{-4}$ & 100 & 0.07 \\
SimJEB & 128  & 4 & 10\,000  & $5\times10^{-4}$ & 50 & mesh \\
\bottomrule
\end{tabular}
\end{table}

\section{Additional results}
\label{app:results}

Table~\ref{tab:mlp_ablation} reports ID and OOD test relative $L_2$ error as a function of hidden dimension and depth ($N_{\rm train}=100$, 5\,000 iterations).
ID error saturates around $64\times4$, while OOD error is slightly lower for smaller architectures, so $64\times4$ is used throughout as a conservative default.

\begin{table}[h]
\centering
\caption{MEEC MLP ablation on advection-diffusion for the experimental described in~\ref{app:exp_setup}, $N_{\rm train}=100$, 5\,000 iterations. Values are Relative $L_2$ (1e-2).}
\label{tab:mlp_ablation}
\setlength{\tabcolsep}{5pt}
\begin{tabular}{lcccc|cccc}
\toprule
 & \multicolumn{4}{c|}{\textbf{ID test relative $L_2$ error (1e-2)}} & \multicolumn{4}{c}{\textbf{OOD test Rel. $L_2$ error (1e-2)}} \\
\textbf{Hidden} & $L{=}1$ & $L{=}2$ & $L{=}3$ & $L{=}4$ & $L{=}1$ & $L{=}2$ & $L{=}3$ & $L{=}4$ \\
\midrule
16  & 1.01 & 0.62 & 0.65 & 0.59 & 2.02 & 1.55 & 1.38 & 1.39 \\
32  & 0.86 & 0.65 & 0.57 & 0.52 & 1.61 & 1.66 & 1.73 & 1.47 \\
64  & 0.74 & 0.57 & 0.51 & \textbf{0.49} & \textbf{1.37} & 1.63 & 1.70 & 1.63 \\
128 & 0.71 & 0.59 & 0.50 & 0.51 & 1.48 & 1.78 & 1.70 & 1.58 \\
\bottomrule
\end{tabular}
\end{table}

\subsection{Resolution scaling and convergence}

Table~\ref{tab:resolution} reports test relative $L_2$ error versus mesh spacing $h$ for a fixed $64\times4$ model trained on $N_{\rm train}=100$ samples for 20\,000 iterations. We empirically verify a $\approx 1.91$ convergence rate, consistent with the solution-level rate observed on Poisson (Section~\ref{sec:mfdec}).

\begin{table}[h]
\centering
\caption{MEEC resolution scaling on advection--diffusion (OBSTACLES). Convergence rate $\approx 1.91$.}
\label{tab:resolution}
\begin{tabular}{cccc}
\toprule
$h$ & $N$ (approx.) & Final ID Rel. $L_2$ error \\
\midrule
0.100 &    544 & 0.96 \\
0.070 &  1\,071 & 0.54 \\
0.050 &  1\,963 & 0.24 \\
0.035 &  3\,966 & 0.13 \\
\bottomrule
\end{tabular}
\end{table}

\subsection{SO(d) invariance}
We demonstrate SO($d$) invariance of our approach in Figure~\ref{fig:sod_invariance}.

\begin{figure}[!t]
    \centering
    \includegraphics[width=0.75\linewidth]{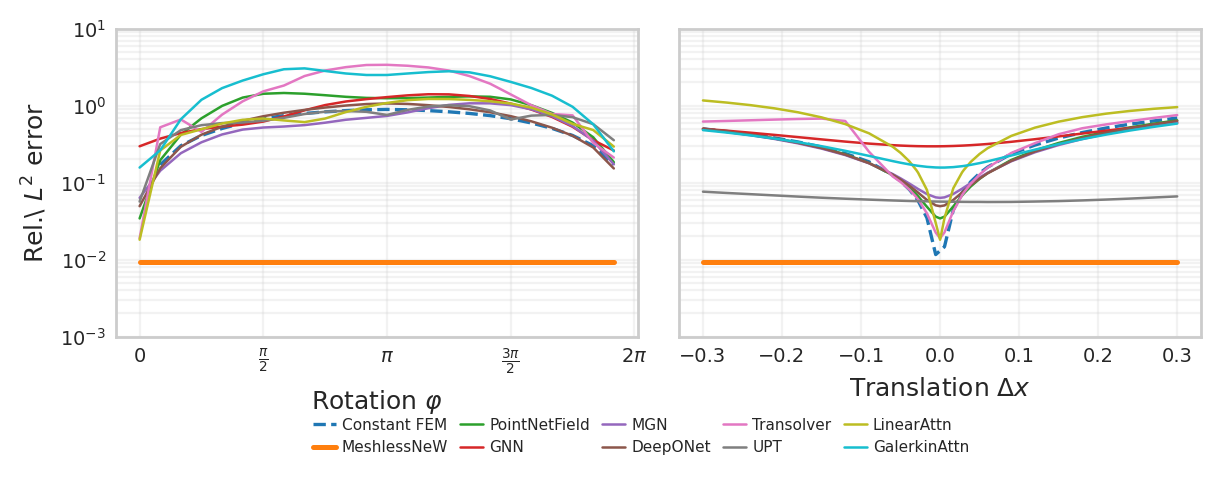}
    \caption{The model inputs are purely in a SO($d$) invariant edge-projected space, which results in invariance in the solutions resulting from solving the model.}
    \label{fig:sod_invariance}
\end{figure}

\subsection{OOD data efficiency}
In Figure~\ref{fig:ood_data_efficiency} we report the relative $L_2$ error on the OOD test data as a function of $N_{\text{train}}$ for the advection diffusion example in the main text. In addition to the geometric variations, this experiment includes parameter variations in $\mu$ as a function of the constant velocity angle, $\theta$. The OOD test $\theta$ range is outside the training conditions and therefore requires some degree of physical extrapolation, which can not be achieved with any of the baselines considered, irrespective of training density. MEEC-Net, however, provides consistent and highly accurate predictions in this regime from as few as a single sample.

\begin{figure}
    \centering
    \includegraphics[width=0.55\linewidth]{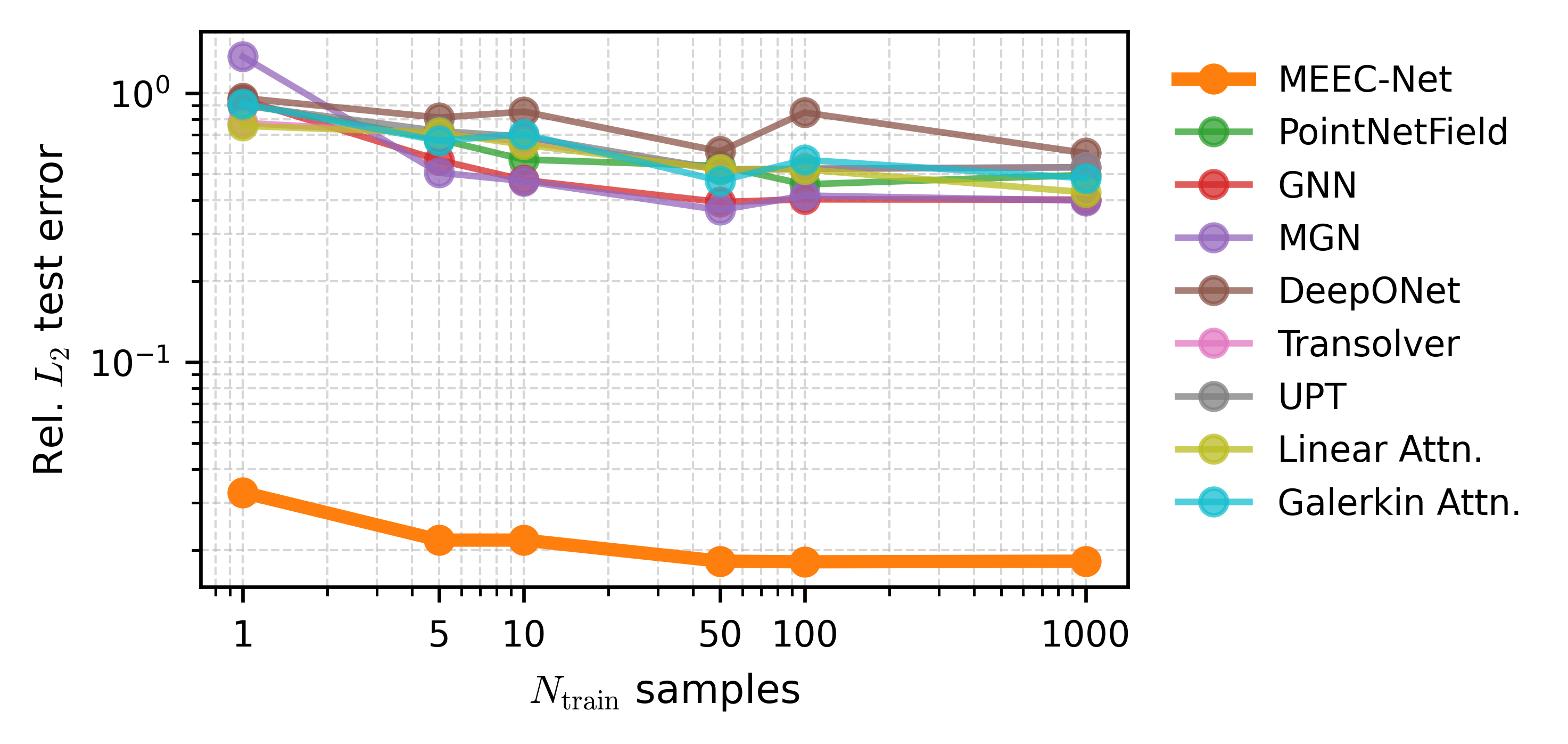}
    \caption{OOD data efficiency for advection diffusion experiment, corresponding to Figure~\ref{fig:data_efficiency}}
    \label{fig:ood_data_efficiency}
\end{figure}

\subsection{Edge feature distribution}
In Figure~\ref{fig:edge_feature_KDE} we consider the 1D distribution of edge space invariant features used in the MEEC-Net formulation. Despite variations in geometry (with two occlusions in the OOD case) and physics via the velocity field, the local frame features provide a consistent representation, which enables the strong generalization of the MEEC-Net approach.

\begin{figure}
    \centering
    \includegraphics[width=0.75\linewidth]{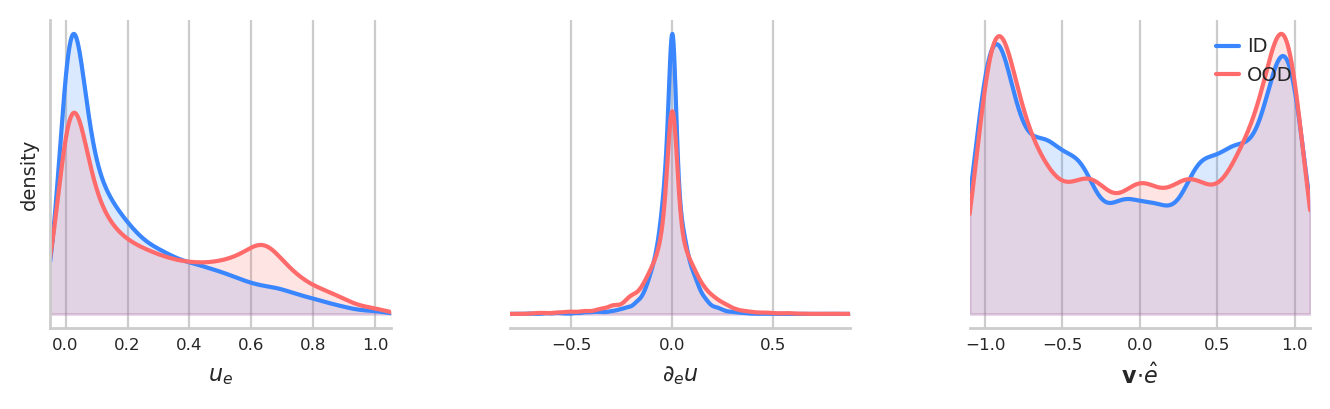}
    \caption{We show the edge feature distributions over a single sample ID and OOD problem for the advection diffusion problem, showing a consistent distribution under geometry and physics shift (via velocity direction), which enables generalization in the learned model.}
    \label{fig:edge_feature_KDE}
\end{figure}

\subsection{Timing comparisons}
We report timing comparisons for all baselines trained on the advection-diffusion experiment reported in Figure~\ref{fig:sweeps_v2}. Since the MEEC-Net approach involves a nonlinear solve around the edge network evaluation for each prediction, we find it is $10$-$100\times$ slower than the direct prediction approaches. On the canonical benchmark resolution used in \ref{fig:sweeps_v2}, MEEC-Net requires \(1.78\times 10^{-2}\) seconds per sample, compared with \(1.94\times 10^{-4}\)--\(1.83\times 10^{-3}\) seconds for baselines. While this can be seen as a limitation of our approach, we emphasize our method offers substantial advantages as a data-efficient and structure-preserving model, beyond efficient inference, namely the large generalization of physics, geometry, and BCs. In cases where a large training corpus can be efficiently curated, direct prediction approaches may be preferable. But in complex engineering tasks where data collection is typically limited and expensive, we view additional structure as a necessary component for providing a useful model.

\begin{table}[t]
\centering
\caption{Mean inference time per sample on the canonical benchmark resolution. MEEC-Net is slower because prediction requires a nonlinear solve, while baselines are direct forward models.}
\label{tab:timing}
\begin{tabular}{lc}
\toprule
Method & Time / sample (s) \\
\midrule
MEEC-Net & \(1.783\times 10^{-2}\) \\
PointNetField & \(1.938\times 10^{-4}\) \\
GNN & \(8.858\times 10^{-4}\) \\
MGN & \(9.214\times 10^{-4}\) \\
DeepONet & \(3.035\times 10^{-4}\) \\
Transolver & \(1.375\times 10^{-3}\) \\
UPT & \(1.833\times 10^{-3}\) \\
LinearAttentionNet & \(1.033\times 10^{-3}\) \\
GalerkinAttentionNet & \(8.888\times 10^{-4}\) \\
\bottomrule
\end{tabular}
\end{table}

\section{Experiment description}\label{app:exp_setup}

\subsection{Canonical PDEs}
\paragraph{Geometries}
For the experiments on canonical PDEs we use a standardized 2D geometry consisting of cutouts from a unit square to enable standardized comparison. We consider two distributions: the first for training and in-distribution (ID) testing, the second with larger variation for out-of-distribution (OOD) testing. Sample geometries and solutions fields for the advection diffusion system are shown in Figure~\ref{fig:id_ood_geoms_canonical}. Specifically, for the ID generation, we use a constant radius obstacle with the center location varying along $y=-x$ diagonal, for the OOD geometries we consider 1 or 2 obstacles with variable radius, placed anywhere in the domain (non-intersecting the outer boundaries). For each experiment in Table~\ref{tab:canonical_results} we additionally consider some source of physical parameter perturbation for the OOD experiments (in $\mu$ or $u_b$), described below for each PDE.

\begin{figure}[!t]
    \centering
    \includegraphics[width=0.75\linewidth]{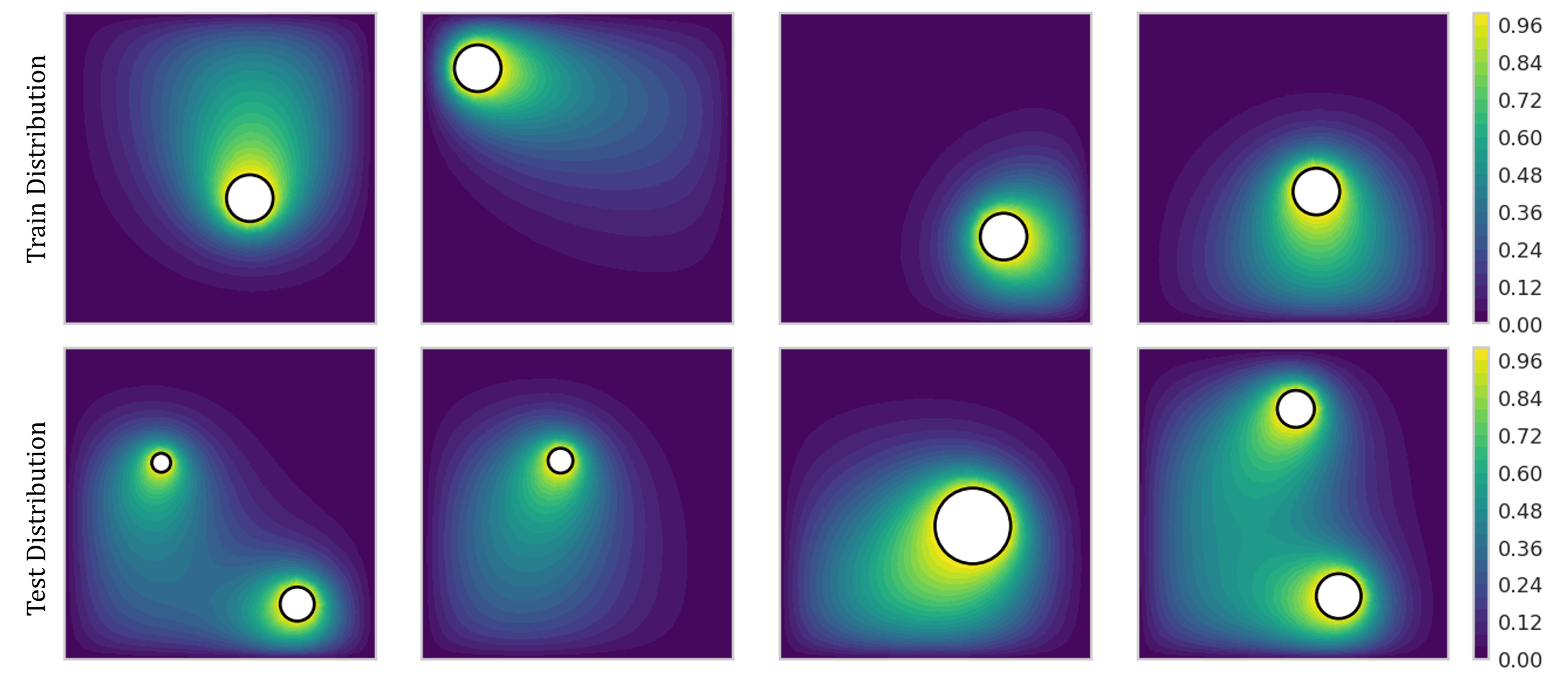}
    \caption{Training and in-distribution geometries (top row) and out-of-distribution test geometries (bottom) row, with solution fields from advection diffusion.}
    \label{fig:id_ood_geoms_canonical}
\end{figure}
\paragraph{Advection--diffusion}
The state is a scalar concentration field $u$.
The PDE is
\[
    -\epsilon\,\Delta u + \mathbf{v} \cdot \nabla u = 0,
    \qquad \mathbf{v}(\theta) = (\cos\theta,\,\sin\theta),
\]
with fixed diffusivity $\epsilon = 0.2$.
Dirichlet conditions impose $u = 1$ on all hole boundaries and $u = 0$ on the outer square.
We pass the velocity field as a vector input tp the models $\mu=\mathbf{v}\in \mathbb{R}^{N \times 2}$.
The flux is $\mathcal{F} = \epsilon\,\nabla u - \mathbf{b}_\theta u$.
For ID data we consider $\theta\in[270^\circ,220^\circ)$ for test, $\theta\in[220^\circ,270^\circ)$

The nonlinear variant replaces the constant advection coefficient by a concentration-dependent one:
\[
    -\epsilon\,\Delta u + (1 + u)\,\mathbf{v}\cdot \nabla u = 0,
\]
with identical boundary conditions and inputs.

\paragraph{Darcy flow}
The state is a scalar field $u$ (e.g. pressure or concentration).
The PDE is
\[
    -\nabla \cdot \bigl(\alpha(\mathbf{x})\,\nabla u\bigr) = 1,
\]
with homogeneous Dirichlet conditions $u = 0$ on all boundaries (hole and outer square).
The permeability $\alpha(\mathbf{x}) > 0$ is a random log-normal field, generated as a truncated Fourier sum with spatially variable amplitude. We define the amplitude $\kappa \in \operatorname{lognormal}(0.0,0.5)$ for ID data generation and $\kappa \in \operatorname{lognormal}(0.0,0.7)$ for OOD, and use 8 spatial fourier modes for the basis for each.
The permeability is provided as the model input as $\mu = \alpha \in \R^N$. 
The flux is $\mathcal{F} = -\alpha\,\nabla u$.

The nonlinear variant uses a field-dependent permeability:
\[
    -\nabla \cdot \bigl((\alpha(\mathbf{x}) + u)\,\nabla u\bigr) = 1,
\]
with the same boundary conditions and conditioning.

\paragraph{Linear elasticity}
The state is a 2D displacement field $\mathbf{u} = (u_x, u_y)$.
The PDE is the plane-strain linear elasticity system $-\nabla \cdot \boldsymbol{\sigma}(\mathbf{u}) = \mathbf{0}$,
where $\boldsymbol{\sigma} = 2\mu\,\boldsymbol{\varepsilon}(\mathbf{u}) + \lambda\,(\nabla\cdot\mathbf{u})\,\mathbf{I}$.
The left boundary is clamped ($\mathbf{u} = \mathbf{0}$); the right boundary is pulled with a fixed-amplitude displacement
$\mathbf{d}(\theta) = \delta(\cos\theta,\,\sin\theta)$ with $\delta = 0.1$.
The hole boundary and the top and bottom edges are traction-free. We vary only the prescribed displacement via the boundary conditions in $u_b$ in this example, and do not have a field input. For ID data generation we consider $\theta\in[270^\circ,220^\circ)$ and for OOD we instead use $\theta\in[220^\circ,270^\circ)$.

\subsection{SimJEB: simulated jet engine bracket}
The Simulated Jet Engine Bracket Dataset (SimJEB) \citep{whalen2021simjeb} is a open repository of submissions to a GE Jet Engine Bracket Challenge which have been uniformly processed for learning and related tasks. There are 381 highly diverse designs, each of which conform to certain constraints to match rivet and load placements. To form the dataset, each design was subjected to the same horizontal, vertical, diagonal, or torsional loading configurations, in a FEM linear elasticity solver. In our experiments we consider only the vertical loading configuration, and use an equivalent \emph{prescribed-displacement} problem specification via dirichlet BCs on the loading points for all models. The point clouds are sub-sampled to $4096$ points. We train on $1$ to $50$ geometries and test on $20$ held-out samples 

\begin{figure}[!t]
    \centering
    \includegraphics[width=0.75\linewidth]{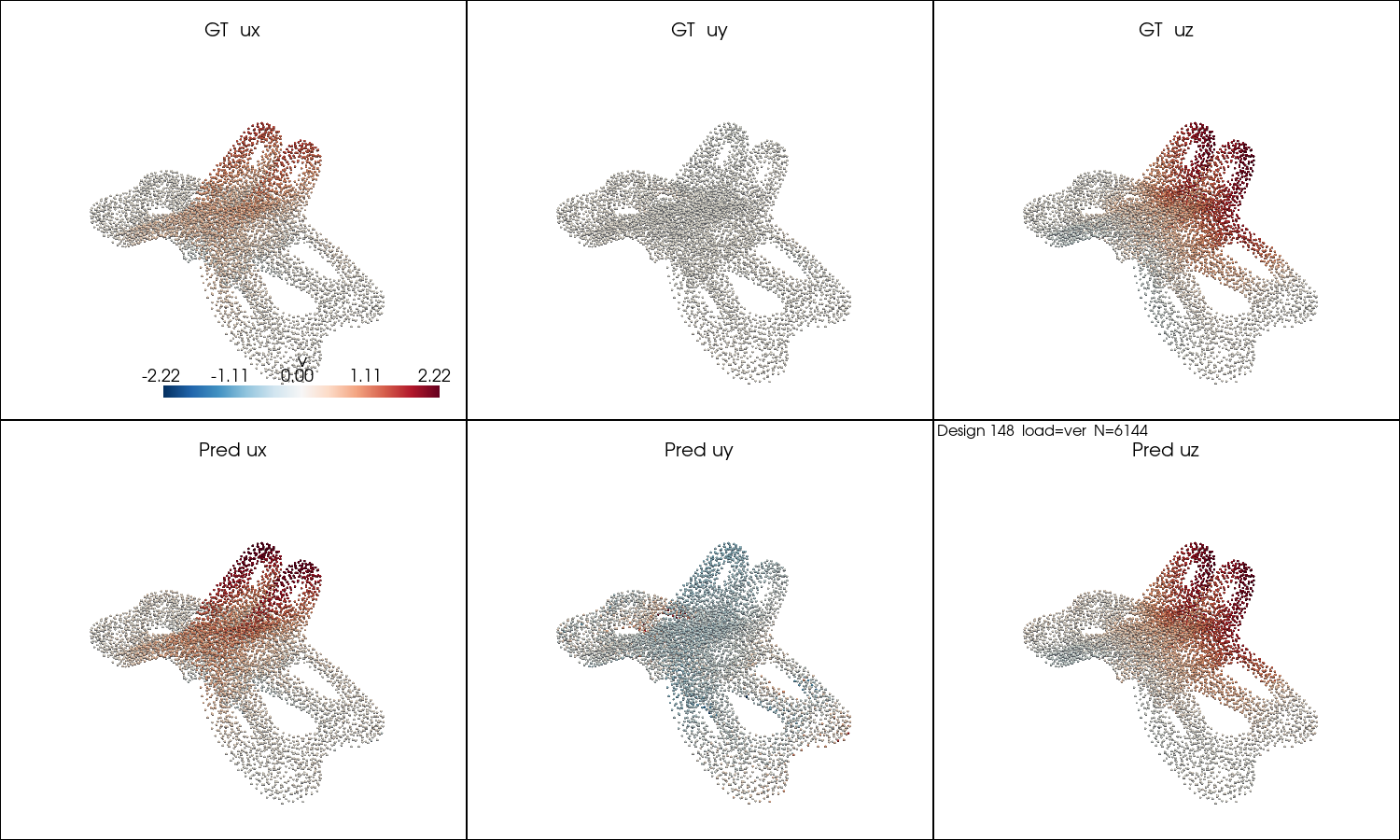}
    \caption{Qualitative prediction comparison on a representative SimJEB validation geometry from MEEC-Net.}
    \label{fig:simjeb_qualitative_v1}
\end{figure}

\section{Baseline methods}
\label{app:baselines}

All baselines receive the same per-node input features and are trained with the same optimizer, learning rate, iteration budget, and loss as MEEC.
Let $N_F$ be the total number of field \emph{components}, i.e. flattening the vector or tensor valued fields, likewise let $N_P$ be the number of physical parameter components with the same flattening.
The per-node feature vector at node $i$ is
\begin{equation}
    \phi_i = \bigl[
        \mathbf{x}_i,\;
        \mathbf{m}_i^{\mathrm{BC}},\;
        \mathbf{v}_i^{\mathrm{BC}},\;
        \mathbf{f}_i,\;
        p_i
    \bigr] \in \mathbb{R}^{d + 2N_F + N_F + N_P},
\end{equation}
where $\mathbf{x}_i \in \mathbb{R}^d$ are the node coordinates, $\mathbf{m}_i^{\mathrm{BC}} \in \{0,1\}^{N_F}$ is a per-field Dirichlet mask (1 if node $i$ carries a prescribed boundary value the field, 0 otherwise), $\mathbf{v}_i^{\mathrm{BC}} \in \mathbb{R}^{N_F}$ are the corresponding prescribed values, $\mathbf{f}_i \in \mathbb{R}^{N_F}$ is the per-field nodal forcing/source term, and finally $p_i \in \mathbb{R}^{N_P}$ are channels of the physical parameter inputs $\mu$ (e.g. diffusivity), consistent with the description in Section~\ref{sec:method}.

We compare against a pointwise MLP, a PointNet-style model~\citep{qi2017pointnet} with max-pooled global context (T-net omitted), a GraphSAGE-style GNN~\citep{hamilton2017inductive}, MeshGraphNet~\citep{pfaff2020learning} (adapted to a meshless $\epsilon$-ball graph with Euclidean edge features), DeepONet~\citep{lu2021learning} (branch net replaced by max-pool over node features to handle variable-$N$ point clouds), Transolver~\citep{wu2024transolver}, UPT~\citep{alkin2024universal}, linear attention~\citep{katharopoulos2020transformers}, and Galerkin attention~\citep{cao2021choose}.

\section{Broader impact statement}
\label{app:broader_impact}

This work develops data-efficient surrogates that learn local constitutive relations from a small number of high-fidelity simulations and transfer them across geometries, boundary conditions, and parameter regimes. The intended impact is in scientific and engineering settings where high-fidelity data is expensive to acquire, including subsurface flow, climate and weather subgrid modeling, biomedical mechanics, additive-manufacturing process simulation, and structural design. With the recent advent of scientific foundation models mandating large corpus of training data, efficient single shot learning can have a large environmental impact by making responsible use of datacenters and compute resources. By reducing the data and compute required to obtain a usable model, the method lowers the barrier to physics-based design in academic and small-industry settings that currently cannot afford large simulation campaigns. We imagine a longer-term impact where large language models are able to reason through these kinds of representations to link physical reasoning to contemporary computer vision.

The principal risk we see is misplaced trust in extrapolation. Our error decomposition (Theorem 1) holds only when test features lie in the training feature distribution. Practitioners deploying the method in safety-relevant workflows should verify feature coverage and validate against held-out high-fidelity simulations before relying on predictions. We do not see acute misuse risks of the kind associated with generative or surveillance technologies; the method provides a similar (albeit potentially enhanced) capability as conventional scientific simulators.

\end{document}